\DeclareMathOperator{\E}{\mathbb{E}}
\theoremstyle{plain}
\newtheorem{theorem}{Theorem}
\newtheorem{lemma}{Lemma}
\newtheorem{proposition}{Proposition}
\theoremstyle{definition}
\theoremstyle{remark}
\newtheorem{remark}{Remark}
\newcommand{\red}[1]{\textcolor{red}{#1}} 
\newcommand{\blue}[1]{\textcolor{blue}{#1}}
\title{Minimax Optimal Fixed-Budget Best Arm Identification in Linear Bandits}
\author{%
    Junwen Yang %\thanks{Use footnote for providing further information about author (webpage, alternative address)---\emph{not} for acknowledging funding agencies.} 
    \\
  Institute of Operations Research and Analytics\\
  National University of Singapore\\
  \texttt{junwen\_yang@u.nus.edu} \\
  \And
  Vincent Y. F. Tan \\
   Department of Mathematics \\
   Department of Electrical and Computer Engineering\\
     Institute of Operations Research and Analytics\\
    National University of Singapore \\
  \texttt{vtan@nus.edu.sg} \\
  % \AND
  % Coauthor \\
  % Affiliation \\
  % Address \\
  % \texttt{email} \\
  % \And
  % Coauthor \\
  % Affiliation \\
  % Address \\
  % \texttt{email} \\
  % \And
  % Coauthor \\
  % Affiliation \\
  % Address \\
  % \texttt{email} \\
}
\begin{document}

\maketitle

\begin{abstract}
    We study the problem of best arm identification in linear bandits in the fixed-budget setting. By leveraging properties of the G-optimal design and incorporating it into the arm allocation rule, we design a parameter-free algorithm, Optimal Design-based Linear Best Arm Identification (OD-LinBAI). We provide a theoretical analysis of the failure probability of OD-LinBAI. Instead of all the optimality gaps, the performance of OD-LinBAI depends only on the gaps of the top $d$ arms, where $d$ is the effective dimension of the linear bandit instance. Complementarily, we present a minimax lower bound for this problem. The upper and lower bounds show that OD-LinBAI is minimax optimal up to constant multiplicative factors in the exponent, which is a significant theoretical improvement over existing methods (e.g., BayesGap, Peace, LinearExploration and GSE), and settles the question of ascertaining the difficulty of learning the best arm in the fixed-budget setting. Finally, numerical experiments  demonstrate considerable empirical improvements over existing algorithms on a variety of real and synthetic datasets.
\end{abstract}

\section{Introduction}
The multi-armed bandit problem is a model that exemplifies the exploration-exploitation tradeoff in online decision making. It    has various applications in drug design, online advertising, recommender systems, and so on. In stochastic multi-armed bandit problems, the agent sequentially chooses an arm from the given arm set at each time step and then observes a random reward drawn from the unknown distribution associated with the chosen arm. 

The standard multi-armed bandit problem, where the arms are not correlated with one another, has been studied extensively in the literature. While the \emph{regret minimization} problem aims at maximizing the cumulative rewards by the trade-off between exploration and exploitation \citep{thompson1933likelihood, auer2002finite, bubeck2012regret,agrawal2012analysis}, the \emph{pure exploration} problem focuses on efficient exploration with specific goals, e.g., to identify the best arm \citep{even2006action,bubeck2009pure,audibert2010best,gabillon2012best,karnin2013almost,carpentier2016tight,garivier2016optimal}. There are two complementary settings for the problem of \emph{best arm identification}: (i) Given $T \in \mathbb N$, the agent aims to maximize the probability of finding the best arm in at most $T$ time steps; (ii) Given $\delta > 0$, the agent aims to find the best arm with the probability of at least $1-\delta$ in the smallest number of steps. These settings are respectively known as the fixed-budget and fixed-confidence settings.

In this paper, we consider the problem of best arm identification in linear bandits in the fixed-budget setting. In linear bandits, the arms are correlated through an unknown global regression parameter vector $\theta^{*} \in \mathbb{R}^{d}$. In particular, each arm $i$ from the arm set $\mathcal A$ is associated with an arm vector $a(i) \in \mathbb{R}^{d}$, and the expected reward of arm $i$ is given by the inner product between $\theta^{*}$ and $a(i)$. Hence, the standard multi-armed bandits and linear bandits are fundamentally different due to the fact that for the latter, pulling one arm can indirectly reveal information about the other arms but in the former, the arms are independent.  

A wide range of applications in practice can be modeled by linear bandits. For example, \citet{tao2018best} considered online advertising, where the goal is to select an advertisement from a pool to maximize the probability of clicking for web users with different features. Empirically, the probability of clicking can be approximated by a linear combination of various attributes associated with the user and the advertisements (such as age, gender, the domain, keywords, advertising genres, etc.). Moreover, \citet{hoffman2014correlation} applied the linear bandit model into the traffic sensor network problem and the problem of automatic model selection and algorithm conﬁguration.

\noindent\textbf{Main contributions.} Our main contributions are as follows:
%\begin{enumerate}[label = (\roman*), noitemsep, nolistsep]
\begin{enumerate}[label = (\roman*)]
    \item  We design an algorithm    {\em Optimal Design-based Linear Best Arm Identification} (OD-LinBAI). This {\em computationally efficient} algorithm utilizes a phased elimination-based strategy in which the number of times each arm is pulled in each phase depends on G-optimal designs \citep{kiefer1960equivalence}. Besides, OD-LinBAI is {\em totally parameter-free}, whereas some existing methods (e.g., BayesGap and Peace) require the knowledge of the problem instance (which is typically not known in practice).
    
    \item We derive an upper bound on the failure probability of OD-LinBAI. The failure probability is a significant improvement over those of existing methods which we survey in detail in Section~\ref{sec:comparisons}.  In particular, we show that the exponent of the failure probability depends on a hardness quantity $H_{2,\mathrm{lin}}$. This quantity is a function of {\em only} the first $d-1$ optimality gaps, where $d$ is the dimension of the arm vectors. This is a surprising and significant difference compared to the upper bounds of the failure probabilities of various algorithms for best arm identification in standard multi-armed bandits \citep{audibert2010best,karnin2013almost,shahrampour2017sequential} and BayesGap \citep{hoffman2014correlation} in linear bandits, which all depend on a hardness quantity that depends on {\em all} the gaps. Moreover, OD-LinBAI improves the exponent of the error probability by a factor of $\Theta(\log d)$ over Peace \citep{katz2020empirical} in the worst-case sense or a factor of $\Theta((\log K)/(\log d))$ (which could be much larger than~$1$) over LinearExploration \citep{alieva2021robust} and GSE \citep{azizi2021fixed} in general.

    \item Lastly, using ideas from~\citet{carpentier2016tight}, we prove a minimax lower bound which involves another hardness quantity $H_{1,\mathrm{lin}}$. By comparing $H_{1,\mathrm{lin}}$ to $H_{2,\mathrm{lin}}$, we show that OD-LinBAI is minimax optimal up to constants in the exponent. OD-LinBAI is the first algorithm that provably achieves minimax optimality in this problem, and finally settles the question of ascertaining the hardness of learning the best arm in the fixed-budget setting for linear bandits. In addition, experiments in both synthetic and real-world datasets firmly corroborate the efficacy of OD-LinBAI vis-\`a-vis other existing methods. 
\end{enumerate}
  
\noindent\textbf{Related work.}  The problem of regret minimization in linear bandits was first studied by \citet{abe1999associative}, and has attracted extensive interest in the development of various algorithms (e.g., UCB-style algorithms \citep{auer2002using,dani2008stochastic,rusmevichientong2010linearly,abbasi2011improved,chu2011contextual}, Thompson sampling \citep{agrawal2013thompson, abeille2017linear}). In particular, in the book of \citet{lattimore2020bandit}, a regret minimization algorithm based on the G-optimal design was proposed for linear bandits with finitely many arms. Although both this algorithm and our algorithm OD-LinBAI utilize the G-optimal design technique, they differ in numerous aspects including the manner of elimination and arm allocation, which emanates from the two different objectives.

For the problem of best arm identification in linear bandits, the fixed-confidence setting has previously been studied in \citep{ tao2018best, soare2014best, xu2018fully,zaki2019towards,fiez2019sequential,degenne2020gamification,jedra2020optimal, kazerouni2021best}. In particular, \citet{soare2014best} introduced the optimal G-allocation problem and proposed a static algorithm $\mathcal X \mathcal Y$-Oracle as well as a semi-adaptive algorithm $\mathcal X \mathcal Y$-Adaptive; see Remark~\ref{rmk:g-opt} for more discussions on \citet{soare2014best}.  \citet{degenne2020gamification} treated the problem as a two-player zero-sum game between the agent and the
nature, and thus designed an asymptotically optimal algorithm for the fixed-confidence setting.

The fixed-budget setting for the problem of best arm identification in linear bandits has also been studied in a few previous and concurrent works. \citet{hoffman2014correlation} introduced a gap-based exploration algorithm BayesGap, which is a Bayesian treatment of UGapEb \citep{gabillon2012best} for standard multi-armed bandits. Peace by \citet{katz2020empirical} utilizes an experimental design based on the Gaussian-Width of the underlying arm set, which characterizes the geometry of the instance better in some instances. However, both BayesGap and Peace are computationally expensive and not parameter-free. Recently, \citet{alieva2021robust} introduced an elimination algorithm named LinearExploration, which is also robust to moderate levels of model misspecification. Generalized Successive Elimination (GSE) by \citet{azizi2021fixed} shares a similar structure with LinearExploration and applies to generalized linear models. Nevertheless, none of the above is minimax optimal. See Section~\ref{section_mainresults} and Section~\ref{section_exp} for more comparisons between OD-LinBAI and other existing algorithms.

\section{Problem setup and preliminaries}
\label{section_setup}
\noindent\textbf{Best arm identification in linear bandits.} We consider the standard linear bandit problem with an unknown global regression parameter. In a linear bandit instance $\nu$, the agent is given an arm set $\mathcal A = [K]$, which corresponds to known arm vectors $\{a(1), a(2),\ldots,a(K)\} \subset \mathbb R^d$. At each time $t$, the agent chooses an arm $A_t$ from the arm set $\mathcal A$ and then observes a noisy reward 
$$
X_{t}=\braket{\theta^{*}, a(A_{t})}+\eta_{t}
$$
where $\theta^{*} \in \mathbb{R}^{d}$ is the unknown parameter vector and $\eta_{t}$ is independent zero-mean $1$-subgaussian random noise.

In the fixed-budget setting, given a time budget $T \in \mathbb N$, the agent aims at maximizing the probability of identifying the best arm, i.e., the arm with the largest expected reward, with no more than $T$ arm pulls. More formally, the agent uses an \emph{online} algorithm $\Pi$ to decide the arm $A_t^{\Pi}$ to pull at each time step $t$, and the arm $i_{\mathrm{out}}^{\Pi} \in \mathcal A$  to output as the identified best arm by time $T$. We abbreviate $A_t^{\Pi}$ as $A_t$ and $i_{\mathrm{out}}^{\Pi}$ as $i_{\mathrm{out}}$ when there is no ambiguity.

For any arm $i\in \mathcal A$, let $p(i) = \braket{\theta^{*}, a(i)}$ denote the expected reward. For convenience, we assume that the expected rewards of the arms are in descending order and the best arm is unique. That is to say, $p(1) > p(2) \ge \dots \ge p(K)$. For any suboptimal arm $i$, we denote $\Delta_i = p(1) - p(i)$ as the optimality gap. For ease of notation, we also set $\Delta_1= \Delta_2$. Furthermore, let $\mathcal E$ denote the set of all the linear bandit instances defined above.

\noindent\textbf{Dimensionality-reduced arm vectors.} For any linear bandit instance, if the corresponding arm vectors do not span $\mathbb R ^d$, i.e., $\operatorname{span} ( \{a(1), a(2),\dots,a(K)\} )  \subsetneq \mathbb R ^d$, the agent can work with a set of dimensionality-reduced arm vectors $\{a'(1), a'(2),\dots,a'(K)\} \subset \mathbb R^{d'}$, that spans $\mathbb R ^{d'}$, with little consequence. Specifically, let $B \in \mathbb R ^{d\times d'} $ be a matrix whose columns form an orthonormal basis of the subspace spanned by $a(1), a(2),\dots,a(K)$.\footnote{Such an orthonormal basis can be calculated efficiently with the reduced singular value decomposition, Gram–Schmidt process, etc.} Then the agent can simply set $a'(i) = B^{\top}a(i)$ for each arm $i$. To verify this, notice that $BB^{\top}$ is a projection matrix onto the subspace spanned by $ \{a(1), a(2),\dots,a(K)\} $ and consequently
\begin{align*}
    p(i) &= \braket{\theta^*, a(i)} = \braket{\theta^*, BB^{\top}a(i)} 
    = \braket{B^{\top}\theta^*, B^{\top}a(i)} = \braket{{\theta^*}', a'(i)} .
\end{align*}Note that $\theta^*$ is the unknown parameter vector for original arm vectors while ${\theta^*}' = B^{\top}\theta^*$ is the corresponding unknown parameter vector for the dimensionality-reduced arm vectors. In the problem of linear bandits, what we really care about is not the original unknown parameter $\theta^*$ itself but the inner products between $\theta^*$ and the arm vectors $a(i)$, which establishes the equivalence of original arm vectors and dimensionality-reduced arm vectors. 

In our work, without loss of generality, we assume that the entire set of original arm vectors $ \{a(1), a(2),\dots,a(K)\} $ span $\mathbb R ^d$ and $d\ge 2$.\footnote{The situation that $d = 1$ is trivial: each arm vector is a scalar multiple of one another.} However, this idea of transforming into dimensionality-reduced arm vectors is often used in our elimination-based algorithm. See Section~\ref{section_algo} for details.

\noindent\textbf{Least squares estimators.} Let $A_1, A_2, \dots, A_n$ be the sequence of arms pulled by the agent and $X_1,X_2,\dots,X_n$ be the corresponding noisy rewards. Suppose that the corresponding arm vectors $\{a(A_1), a(A_2), \dots, a(A_n)\}$ span $\mathbb R ^d$, then the ordinary least squares (OLS) estimator of $\theta^*$ is given by
\begin{equation*}
\hat{\theta}=V^{-1} \sum_{t=1}^{n} a(A_{t}) X_{t}
\end{equation*}
where $V=\sum_{t=1}^{n}  a(A_t) a(A_t)^{\top} \in \mathbb R^{d\times d}$ is invertible. By applying the properties of subgaussian random variables, a confidence bound for the OLS estimator can be derived as follows. 
\begin{proposition}[{\citet[Chapter 20]{lattimore2020bandit}}]
\label{prop_concentration_estimator}
If $A_1, A_2, \dots, A_n$ are deterministically chosen without knowing the realizations  of $X_1,X_2,\dots,X_n$, then for any $a \in \mathbb R^d$ and $\delta > 0$, 
\begin{equation*}
    \Pr \left[ \braket{\hat{\theta}-\theta^*, a}\ge \sqrt{2\|a\|^2_{V^{-1}} \log\left(\frac 1 {\delta} \right)}\right] \le \delta.
\end{equation*}
\end{proposition}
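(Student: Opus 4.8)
The plan is to reduce the claim to a standard one-dimensional subgaussian tail bound. First I would substitute the reward model $X_t = \braket{\theta^*, a(A_t)} + \eta_t$ into the definition of $\hat\theta$ and use the definition of $V$ to isolate the noise contribution. Writing $\braket{\theta^*, a(A_t)} = a(A_t)^\top \theta^*$ gives
\begin{equation*}
\hat\theta = V^{-1}\sum_{t=1}^{n} a(A_t) a(A_t)^\top \theta^* + V^{-1}\sum_{t=1}^{n} a(A_t)\eta_t = \theta^* + V^{-1}\sum_{t=1}^{n} a(A_t)\eta_t,
\end{equation*}
where the first term collapses to $\theta^*$ because $\sum_{t=1}^{n} a(A_t)a(A_t)^\top = V$. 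Hence $\hat\theta - \theta^* = V^{-1}\sum_{t=1}^{n} a(A_t)\eta_t$, so the estimation error is a purely linear functional of the noise.

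Next I would project onto $x$ to obtain a scalar random variable:
\begin{equation*}
\braket{\hat\theta - \theta^*, x} = \sum_{t=1}^{n} \bigl(x^\top V^{-1} a(A_t)\bigr)\eta_t =: \sum_{t=1}^{n} c_t \eta_t .
\end{equation*}
The crucial point is that, since $A_1,\dots,A_n$ are chosen deterministically without knowledge of the realized rewards, each coefficient $c_t = x^\top V^{-1} a(A_t)$ is a fixed constant. A weighted sum of independent zero-mean $1$-subgaussian random variables with fixed weights is itself zero-mean subgaussian, with variance proxy $\sum_{t=1}^{n} c_t^2$.

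It then remains to identify this variance proxy with $\|x\|^2_{V^{-1}}$. Expanding and using the definition and symmetry of $V$,
\begin{equation*}
\sum_{t=1}^{n} c_t^2 = x^\top V^{-1}\Bigl(\sum_{t=1}^{n} a(A_t)a(A_t)^\top\Bigr)V^{-1} x = x^\top V^{-1} V V^{-1} x = x^\top V^{-1} x = \|x\|^2_{V^{-1}} .
\end{equation*}
Therefore $\braket{\hat\theta - \theta^*, x}$ is $\sqrt{\|x\|^2_{V^{-1}}}$-subgaussian, and the standard one-sided Chernoff bound $\Pr(Z \ge \epsilon) \le \exp(-\epsilon^2/(2\sigma^2))$ with $\sigma^2 = \|x\|^2_{V^{-1}}$ and $\epsilon = \sqrt{2\|x\|^2_{V^{-1}}\log(1/\delta)}$ produces a right-hand side of exactly $\delta$, which is the claim.

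There is no genuine obstacle here; the argument is essentially bookkeeping. The only point requiring care — and the reason the deterministic-allocation hypothesis is stated explicitly — is ensuring the coefficients $c_t$ are non-random, so that the elementary fact that a fixed linear combination of independent subgaussians is subgaussian applies directly. If the arms $A_t$ were permitted to depend on past rewards, the $c_t$ would become random and one would instead need a martingale-based self-normalized concentration inequality.
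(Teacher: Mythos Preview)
Your argument is correct. The paper itself does not supply a proof of this proposition; it simply cites \citet[Chapter~20]{lattimore2020bandit} for the result, and your derivation is exactly the standard textbook argument found there: rewrite $\hat\theta-\theta^*$ as $V^{-1}\sum_t a(A_t)\eta_t$, project onto $x$, observe that the resulting scalar is a fixed linear combination of independent $1$-subgaussian noises with variance proxy $\|x\|_{V^{-1}}^2$, and apply the one-sided Chernoff bound.
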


\begin{remark} 
When the arm pulls are adaptively chosen according to the random rewards, Proposition~\ref{prop_concentration_estimator} no longer applies and an extra factor $\sqrt d$ has to be paid for adaptive arm pulls \citep{abbasi2011improved}. Our algorithm avoids this issue by deciding the arm pulls at the beginning of each phase, and designing the OLS estimator only based on the information from the current phase. See Section~\ref{section_algo} for details.
\end{remark}

\noindent\textbf{G-optimal design.} The confidence interval in Proposition~\ref{prop_concentration_estimator} shows the strong connection between the arm allocation in linear bandits and experimental design theory \citep{pukelsheim2006optimal}. To control the confidence bounds, we first introduce the G-optimal design technique into the problem of best arm identification in linear bandits in the fixed-budget setting. Formally, the G-optimal design problem aims at finding a probability distribution $\pi: \{a(i):i\in \mathcal A \} \rightarrow[0,1]$ that minimises 
\begin{equation*}
    g(\pi) = \max_{i\in \mathcal A} \| a(i) \|^2_{V(\pi)^{-1}}
\end{equation*}
where $V(\pi) = \sum_{i\in \mathcal A} \pi(a(i))a(i)a(i)^{\top}$. Theorem~\ref{theorem_goptimal} states the existence of a small-support G-optimal design and the minimum value of $g$.

\begin{theorem}[\citet{kiefer1960equivalence}]
\label{theorem_goptimal}
If the arm vectors $\{a(i):i\in \mathcal A \}$ span $\mathbb R ^d$, the following statements are equivalent: (i) $\pi^*$ is a minimiser of $g$; (ii) $\pi^*$  is a maximiser of $f(\pi) = \log \det V(\pi)$; (iii) $g(\pi^*) = d$.
Furthermore, there exists a minimiser $\pi^*$ of $g$ such that $|\operatorname{Supp}\left(\pi^*\right)| \leq d(d+1) / 2$. 
\end{theorem}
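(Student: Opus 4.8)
The plan is to establish the chain of equivalences by exploiting two elementary but crucial facts about the information matrix $V(\pi)$, and then to obtain the support bound by a Carath\'eodory-type argument on the boundary of the feasible set. Throughout I write $V = V(\pi)$ and regard a design $\pi$ as a point in the probability simplex over the $K$ arms, which is compact and convex.

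First I would record the two workhorse identities. Since $V(\pi) = \sum_i \pi(a(i)) a(i)a(i)^\top$ is affine in $\pi$ and $M \mapsto \log\det M$ is concave on the positive-definite cone, $f$ is concave on the region of the simplex where $V(\pi)\succ 0$; moreover its partial derivative in the weight on arm $a$ is
\begin{equation*}
\frac{\partial}{\partial \pi(a)} \log\det V(\pi) = \operatorname{tr}\!\big(V^{-1} a a^\top\big) = \|a\|^2_{V^{-1}}.
\end{equation*}
Second, the trace identity: for any $\pi$ with $V\succ 0$,
\begin{equation*}
\sum_{i} \pi(a(i)) \, \|a(i)\|^2_{V^{-1}} = \operatorname{tr}\!\Big(V^{-1}\sum_i \pi(a(i)) a(i)a(i)^\top\Big) = \operatorname{tr}(I_d) = d.
\end{equation*}
Because the left side is a weighted average of the quantities $\|a(i)\|^2_{V^{-1}}$, it follows immediately that $g(\pi) = \max_i \|a(i)\|^2_{V^{-1}} \ge d$ for every feasible $\pi$ (and $g(\pi) = +\infty$ when $V$ is singular), so $d$ is a universal lower bound on $g$.

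With these in hand the equivalences fall out. For (2) $\Leftrightarrow$ (3): by concavity, $\pi^*$ maximizes $f$ iff the directional derivative toward every vertex $\delta_a$ is nonpositive, i.e. $\|a\|^2_{V^{-1}} - d \le 0$ for all arms $a$ (using that the derivative toward a general $\pi'$ equals $\sum_i \pi'(a(i))\|a(i)\|^2_{V^{-1}} - d$ after the trace identity). This says exactly $g(\pi^*) \le d$, which with the lower bound forces $g(\pi^*) = d$; conversely $g(\pi^*)=d$ is precisely the first-order condition, which is sufficient by concavity. For (1) $\Leftrightarrow$ (3): if $g(\pi^*) = d$ then $\pi^*$ attains the universal lower bound, hence minimizes $g$; and if $\pi^*$ minimizes $g$, then taking any maximizer $\bar\pi$ of $f$ (which exists since $f$ is upper semicontinuous and equals $-\infty$ at singular $V$, so the maximum is attained on the compact simplex) we have $g(\bar\pi) = d$ by (2)$\Rightarrow$(3), forcing $g(\pi^*)\le d$ and therefore $g(\pi^*)=d$.

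Finally, for the support bound I would start from a maximizer $\pi^*$ of $f$ and its matrix $V^* = V(\pi^*)$, which lies in the convex hull $\mathcal V = \operatorname{conv}\{a(i)a(i)^\top\}$ inside the $d(d+1)/2$-dimensional space of symmetric matrices. The key observation is that $V^*$ lies on the boundary of $\mathcal V$: the gradient of $\log\det$ at $V^*$ is $(V^*)^{-1}\succ 0$, which is never zero, so $V^*$ cannot be an interior maximizer of the concave map $V\mapsto\log\det V$ over $\mathcal V$. A boundary point of a convex body in an $n$-dimensional space lies in a supporting face of dimension at most $n-1$, so by Carath\'eodory's theorem it is a convex combination of at most $n = d(d+1)/2$ of the generators $a(i)a(i)^\top$. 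This yields a design $\hat\pi$ with $V(\hat\pi) = V^*$, hence $g(\hat\pi) = g(\pi^*) = d$ and $|\operatorname{Supp}(\hat\pi)| \le d(d+1)/2$. I expect the main obstacle to be exactly this last step---justifying the boundary-face refinement that removes the extra ``$+1$'' from the naive Carath\'eodory bound---whereas the equivalence of (1)--(3) is essentially bookkeeping around the gradient and trace identities.
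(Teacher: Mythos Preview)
The paper does not prove this theorem at all: it is stated with attribution to Kiefer and Wolfowitz and used as a black box. So there is no ``paper's own proof'' to compare against.

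That said, your sketch is the standard argument and is correct for the equivalence of (1)--(3); the gradient computation, the trace identity giving $g(\pi)\ge d$, and the first-order optimality characterization for the concave $f$ are all sound.

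For the support bound, your reasoning is right in spirit but the phrasing has a small gap. You argue that $(V^*)^{-1}\neq 0$ forces $V^*$ to the boundary of $\mathcal V$. This is clean only when $\mathcal V$ is full-dimensional in the $d(d+1)/2$-dimensional space of symmetric matrices; if $\mathcal V$ sits in a proper affine subspace, a nonzero gradient does not by itself rule out a relative-interior maximum (the gradient could be orthogonal to the affine hull). The fix is easy and in fact sharper: from $g(\pi^*)=d$ and the trace identity you already know that the hyperplane $H=\{M:\langle (V^*)^{-1},M\rangle=d\}$ supports $\mathcal V$ at $V^*$, and the extreme points of the face $\mathcal V\cap H$ are exactly those $a(i)a(i)^\top$ with $\|a(i)\|^2_{(V^*)^{-1}}=d$. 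Since $H$ has codimension one, $\dim(\mathcal V\cap H)\le d(d+1)/2-1$, and Carath\'eodory inside this face yields a representation with at most $d(d+1)/2$ generators---regardless of whether $\mathcal V$ was full-dimensional to begin with. This is precisely the ``boundary-face refinement'' you flagged as the likely obstacle, and framing it via the explicit supporting hyperplane $H$ (rather than the nonvanishing gradient) removes the ambiguity.
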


\begin{remark} \label{rmk:g-opt}
It is worth mentioning that the G-optimal design problem for finite arm vectors is a convex optimization problem while the original G-allocation problem in \citet{soare2014best} for the fixed-confidence best arm identification in linear bandits is an NP-hard discrete optimization problem. A classical algorithm to solve the G-optimal design problem is the Frank--Wolfe algorithm \citep{frank1956algorithm}, whose modified version guarantees linear convergence \citep{damla2008linear}. For our work, it is sufficient to compute an $\epsilon$-approximate optimal design\footnote{For an $\epsilon$-approximate optimal design $\pi$, $g(\pi) \le (1+\epsilon)d$.} with minimal impact on performance. Recently, a near-optimal design with smaller support was proposed in \citet{lattimore2020learning}, which might be helpful in some scenarios. See Appendix~\ref{appendix_goptimal} for more discussions on the above issues. To reduce clutter and ease the reading, henceforward in the main text, we assume that a G-optimal design for finite arm vectors can be found accurately and efficiently.
\end{remark}

\section{Algorithm}
\label{section_algo}
Pseudocode for our algorithm {\em  Optimal Design-based Linear Best Arm Identification} (OD-LinBAI) is presented in Algorithm~\ref{algo1}.

\begin{algorithm}[!t]
\caption{Optimal Design-based Linear Best Arm Identification (OD-LinBAI)} 
\label{algo1}
\hspace*{0.00in} {\bf Input:} time budget $T$, arm set $\mathcal A = [K]$ and arm vectors $\{a(1), a(2),\dots,a(K)\} \subset \mathbb R^d.$
\begin{algorithmic}[1]
\State Initialize $t_0 = 1$, $ \mathcal A_{0} \leftarrow \mathcal A$ and $d_0=d$.
\State For each arm $i \in \mathcal A_{0}$, set $a_0(i) = a(i)$.
\State Calculate $m$ using Equation~(\ref{equation_m}).
\For{$r=1$ to $\lceil\log _{2} d\rceil$} 
\State Set $d_r = \dim \left (\operatorname{span}\left(\{a_{r-1}(i):i\in \mathcal A_{r-1}\}\right)\right)$.
\If{$d_r = d_{r-1}$}
\State For each arm $i \in \mathcal A_{r-1}$, set $a_r(i) = a_{r-1}(i)$.
\Else
\State Find matrix $B_r \in \mathbb R^{d_{r-1}\times d_r}$ whose columns form a orthonormal basis of the subspace spanned by $\{a_{r-1}(i):i\in \mathcal A_{r-1}\}$.
\State For each arm $i \in \mathcal A_{r-1}$, set $a_r(i) = B_r^{\top}a_{r-1}(i)$.
\EndIf
\If{$r=1$}
\State Find a G-optimal design $\pi_r:\{a_r(i):i\in \mathcal A_{r-1}\}\rightarrow[0,1]$ with $|\operatorname{Supp}\left(\pi_{r}\right)| \leq  \frac{d(d+1)}{2} $. 
\Else
\State Find a G-optimal design $\pi_r:\{a_r(i):i\in \mathcal A_{r-1}\}\rightarrow[0,1]$.
\EndIf
\State Set 
$$
T_r(i) = \left\lceil \pi_r(a_r(i)) \cdot m \right\rceil \;\;\text { and } \;\; 
T_r = \sum_{i \in  \mathcal A_{r-1}} T_r(i).
$$
\State Choose each arm $i \in  \mathcal A_{r-1}$ exactly $T_r(i)$ times.
\State Calculate the OLS estimator: 

$$
\hat{\theta}_{r}=V_{r}^{-1} \sum_{t=t_{r}}^{t_{r}+T_{r}-1} a_r(A_{t}) X_{t}\quad
\text{with}
\quad V_{r}=\sum_{i \in \mathcal{A}_{r-1}} T_{r}(i) a_r(i) a_r(i)^{\top}.
$$
\State  For each arm $i \in  \mathcal A_{r-1}$, estimate the expected reward:
$$
\hat{p}_{r}(i) = \braket{\hat{\theta}_{r},a_r(i)}.
$$
\State Let $ \mathcal A_{r}$ be the set of $\lceil  d / 2^r\rceil$ arms in $ \mathcal A_{r-1}$ with the largest estimates of the expected rewards.
\State Set $t_{r+1} = t_r+T_r$.
\EndFor
\end{algorithmic}
\hspace*{0.00in} {\bf Output:} the only arm $i_{\mathrm{out}}$ in $ \mathcal A_{\lceil\log _{2} d \rceil}$.
\end{algorithm}
The algorithm partitions the whole horizon into $\lceil\log _{2} d \rceil$ phases, and maintains an \emph{active} arm set $\mathcal A_r$ in each phase $r$. The length of each phase roughly equals $m$, which will be formally defined in~(\ref{equation_m}). 

Motivated by the equivalence of the original arm vectors and the dimensionality-reduced arm vectors, at the beginning of each phase $r$, the algorithm computes a set of dimensionality-reduced arm vectors $\{a_r(i):i\in \mathcal A_{r-1}\} \subset \mathbb R ^{d_r}$ which spans the $d_r$-dimensional Euclidean space $\mathbb R ^{d_r}$. This can be implemented based on the dimensionality-reduced arm vectors of the last phase $\{a_{r-1}(i):i\in \mathcal A_{r-1}\} $ in an iterative manner (Lines $5-11$).

After that, Algorithm~\ref{algo1} finds a G-optimal design $\pi_r$ for the current dimensionality-reduced arm vectors, with a restriction on the cardinality of the support when $r=1$. OD-LinBAI then pulls each arm in $\mathcal A _{r-1}$ according to the proportions specified by the optimal design $\pi_r$. Specifically, the algorithm chooses each arm $i \in  \mathcal A_{r-1}$ exactly $T_r(i) = \left\lceil \pi_r(a_r(i)) \cdot m \right\rceil$ times, where the parameter $m$ is fixed among different phases and defined as 
\begin{equation}
\label{equation_m}
    m = \frac {T-\min(K, \frac { d (d +1)} 2) -\sum\limits_{r=1}^{{\lceil\log _{2}  d \rceil}-1} {\left\lceil\frac {d} {2^{r}}\right\rceil}} {{\lceil\log _{2}  d \rceil} }.
\end{equation}
Note that $m = \Theta(T/\log_2 d)$ as $T\rightarrow \infty$ with $K$ fixed. Lemma~\ref{lemma_m} in Appendix~\ref{appendix_upperbound} shows with such choice of $m$, the total time budget consumed by the agent is no more than $T$. The parameter $m$ plays a significant role in the implementation as well as the theoretical analysis of Algorithm~\ref{algo1}.

Since the support of the G-optimal design $\pi_r$ must span $\mathbb R ^{d_r}$, the OLS estimator can be directly applied (Line $19$). Then for each arm $i \in \mathcal A _{r-1}$, an estimate of the expected reward is derived. Algorithm~\ref{algo1} decouples the estimates of different phases and only utilizes the information obtained in the current phase $r$. 

At the end of each phase $r$, Algorithm~\ref{algo1} eliminates a subset of possibly suboptimal arms. In particular, $K-\lceil d / 2\rceil$ arms are eliminated in the first phase and about half of the active arms are eliminated in each of the following phases. Eventually, there is only single arm $i_{\mathrm{out}}$ in the active set, which is the output of Algorithm~\ref{algo1}.

\begin{remark} 
\label{rmk:standard}
It is worth considering the case of standard multi-armed bandits, which can be modeled as a special case of linear bandits. In particular, for any arm $i\in \mathcal A=[K]$, the corresponding arm vector is chosen to be $e_i$, which is the $i^{\mathrm{th}}$ standard basis vector of $\mathbb{R}^K$. It follows that $ d = K$, $\theta ^ * = [p(1),p(2),\ldots,p(K)]^{\top} \in \mathbb{R}^K$ and arms are not correlated with one another. A simple mathematical derivation shows that we can always use a set of standard basis vectors of $\mathbb{R}^{d_r}$ to represent the arm vectors regardless of which arms remain active during phase $r$. Also, the G-optimal design for a set of standard basis vectors is the uniform distribution on all of the active arms. Since pulling one arm does not provide information about the other arms, the empirical estimates based on the OLS estimator are exactly the empirical means. Altogether, for standard multi-armed bandits, OD-LinBAI reduces to the procedure of Sequential Halving \citep{karnin2013almost}, which is a state-of-the-art algorithm for best arm identification in standard multi-armed bandits in the fixed-budget setting.
\end{remark}

\begin{remark}
 The G-optimal design steps in Lines $13$ and $15$ in OD-LinBAI may be replaced by the $\mathcal{XY}$-allocation \citep{soare2014best} or other techniques in experimental designs. However, our work focuses on establishing minimax optimality and thus the application of G-optimal designs, which optimize over the worst cases, is natural. The $\mathcal{XY}$-allocation may result in better empirical performance but the improvement might be limited or even absent in worst-case scenarios. More importantly, as noted in \citet[Remark 1]{degenne2020gamification}, for the general $\mathcal{XY}$-allocation problem, only heuristic solutions can be obtained (without convergence guarantees). Nevertheless, the G-optimal design problem can be provably solved with a linear convergence guarantee \citep{damla2008linear}. 
 Overall, the implementation of OD-LinBAI is computationally very efficient. 
\end{remark}

\section{Main results}
\label{section_mainresults}
\subsection{Upper bound}
We first state an upper bound on the error probability of OD-LinBAI (Algorithm~\ref{algo1}). The proof of Theorem~\ref{theorem_upperbound} is deferred to Appendix~\ref{appendix_upperbound}.

\begin{theorem}
\label{theorem_upperbound}
For any linear bandit instance $\nu \in \mathcal E$, OD-LinBAI outputs an arm $i_{\mathrm{out}}$ satisfying 
$$
\Pr\left[i_{\mathrm{out}} \neq 1 \right] \le
\left( \frac {4K} { d}+3\log _{2} d\right) \exp \left( - \frac {m} {32H_{2,\mathrm{lin}}} \right)
$$where  $m$ is defined in Equation~(\ref{equation_m}) and
$$
H_{2,\mathrm{lin}} = \max_{2\leq i \leq d} \frac {i} {\Delta_i^2}.
$$
\end{theorem}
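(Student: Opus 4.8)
The plan is to bound, for each phase $r$, the conditional probability that arm $1$ is eliminated during phase $r$ given that it is still active at the start of phase $r$, and then to combine these by a union bound over the $\lceil\log_2 d\rceil$ phases, since $\{i_{\mathrm{out}}\neq 1\}$ is contained in the union over $r$ of the events ``arm $1$ survives into phase $r$ but is dropped in phase $r$.'' First I would record the per-phase confidence guarantee. Because the pull counts $T_r(i)\ge \pi_r(a_r(i))\,m$ are fixed before the phase and $\hat\theta_r$ uses only phase-$r$ rewards, Proposition~\ref{prop_concentration_estimator} applies with $V_r\succeq m\,V(\pi_r)$. Hence for $x=a_r(i)-a_r(1)$, the triangle inequality together with the identity $g(\pi_r)=d_r$ from Theorem~\ref{theorem_goptimal} gives $\|x\|_{V_r^{-1}}^2\le \tfrac1m\big(\|a_r(i)\|_{V(\pi_r)^{-1}}+\|a_r(1)\|_{V(\pi_r)^{-1}}\big)^2\le 4d_r/m$. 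Since $\hat p_r(i)-\hat p_r(1)-\big(p(i)-p(1)\big)=\braket{\hat\theta_r-\theta^*,x}$ and $p(i)-p(1)=-\Delta_i$, Proposition~\ref{prop_concentration_estimator} yields the key pairwise bound $\Pr[\hat p_r(i)\ge\hat p_r(1)]\le\exp(-m\Delta_i^2/(8d_r))$.

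For the elimination step in a phase $r\ge 2$, write $n_r=\lceil d/2^r\rceil=|\mathcal A_r|$ and note $d_r\le|\mathcal A_{r-1}|=\lceil d/2^{r-1}\rceil\le 2n_r$. Arm $1$ is dropped only if at least $n_r$ active arms beat it empirically. The crucial combinatorial observation is that the near-optimal arms need not be controlled: fewer than $n_r/2$ active arms can have gap below $\Delta_{\lfloor n_r/2\rfloor}$ (these are globally the smallest-gap arms), so if arm $1$ is dropped then at least $n_r/2$ of the beating arms have index $\ge\lfloor n_r/2\rfloor$, hence gap at least $\Delta_{\lfloor n_r/2\rfloor}$. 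Applying Markov's inequality to this count and then the key pairwise bound gives a failure probability at most $\tfrac{2}{n_r}\sum_{i=\lfloor n_r/2\rfloor}^{\lceil d/2^{r-1}\rceil}\exp(-m\Delta_i^2/(8d_r))$. Using $d_r\le 2n_r$, the hardness inequality $\Delta_i^2\ge i/H_{2,\mathrm{lin}}$ for $i\le d$, and $i\ge n_r/2$, every summand is at most $\exp(-m/(32H_{2,\mathrm{lin}}))$; since there are at most $3n_r/2$ summands, the phase-$r$ failure probability is at most $3\exp(-m/(32H_{2,\mathrm{lin}}))$. Summing over the phases produces the $3\log_2 d$ term.

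Phase $1$ requires separate care because $|\mathcal A_0|=K$ may greatly exceed $d$ while $d_1=d$. The same fractional counting bounds its failure probability by $\tfrac{2}{n_1}\sum_{i=\lfloor n_1/2\rfloor}^{K}\exp(-m\Delta_i^2/(8d))$. The indices $i\le d$ are handled exactly as above and fold into the $\log_2 d$ term, whereas for the $K-d$ arms with $i>d$ one uses monotonicity of the gaps and $\Delta_d^2\ge d/H_{2,\mathrm{lin}}$ to bound each summand by $\exp(-m/(8H_{2,\mathrm{lin}}))\le\exp(-m/(32H_{2,\mathrm{lin}}))$; since $n_1=\lceil d/2\rceil\ge d/2$, this contributes the $\tfrac{4K}{d}\exp(-m/(32H_{2,\mathrm{lin}}))$ term. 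Adding the phase contributions then yields the stated bound.

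The main obstacle is obtaining a prefactor that is constant per phase rather than one scaling with the number of active arms. A naive union bound over the $\Theta(d/2^r)$ potentially dangerous arms, or a naive Markov bound that includes the near-optimal small-gap arms, both fail --- the former inflates the prefactor and the latter degrades into mere polynomial decay. The fix, which is the technical heart of the argument, is the fractional-threshold counting: only arms of index $\gtrsim n_r/2$ must be controlled, so the number of surviving summands is $O(n_r)$ and exactly cancels the $1/n_r$ produced by Markov's inequality, while every retained exponent can be uniformly lower-bounded by $m/(32H_{2,\mathrm{lin}})$. One must also verify that the worst case over the \emph{random} realized set $\mathcal A_{r-1}$ is the set of smallest-gap arms (so that the deterministic sums above are legitimate upper bounds), that conditioning on the history leaves the phase-$r$ noise fresh so Proposition~\ref{prop_concentration_estimator} stays applicable, and that the choice of $m$ keeps the total number of pulls at most $T$ (Lemma~\ref{lemma_m}). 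Tracking the elementary constants so that the ceilings and the factor $d_r\le 2n_r$ all collapse into the single exponent $m/(32H_{2,\mathrm{lin}})$ is the remaining bookkeeping.
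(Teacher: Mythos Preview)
Your approach is essentially identical to the paper's: the same pairwise concentration bound via Proposition~\ref{prop_concentration_estimator} and Theorem~\ref{theorem_goptimal}, then Markov's inequality on the count of ``beating'' arms after setting aside the near-optimal ones, and finally the hardness inequality $\Delta_i^2\ge i/H_{2,\mathrm{lin}}$ summed over phases. The only noteworthy difference is the precise cutoff: the paper excludes arm $1$ together with the top $\lceil d/2^{r+1}\rceil-1$ suboptimal arms (threshold index $i_r=\lceil d/2^{r+1}\rceil+1$ and Markov threshold $\lceil d/2^r\rceil-\lceil d/2^{r+1}\rceil+1$), whereas your cutoff $\lfloor n_r/2\rfloor$ is slightly smaller and, as stated, does not quite deliver the exact constants $3$ and $32$ for odd or very small $n_r$ --- adopting the paper's indices makes the ceiling arithmetic close cleanly, which is precisely the ``remaining bookkeeping'' you flagged.
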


Theorem~\ref{theorem_upperbound} shows the error probability of OD-LinBAI is upper bounded by  \begin{equation}
\label{equation_upper_OD}
    \exp\left(-\Omega\left( \frac{T}{H_{2,\mathrm{lin}} \log_2d }\right) \right)
\end{equation}which depends on $T$, $d$ and $H_{2,\mathrm{lin}}$. We remark that none of the three terms is avoidable in view of our lower bounds (see Section~\ref{subsection_lower}).

In particular, $T$ is the time budget of the problem and $d$ is the effective dimension of the arm vectors.\footnote{Recall that we assume the entire set of original arm vectors $ \{a(1), a(2),\dots,a(K)\} $ span $\mathbb R ^d$.} Given $T$ and $d$, $H_{2,\mathrm{lin}}$ quantifies the difficulty of identifying the best arm in the linear bandit instance. The parameter $H_{2,\mathrm{lin}}$ generalizes its analogue
$$
H_{2} = \max_{2\leq i \leq K} \frac {i} {\Delta_i^2}
$$proposed by \citet{audibert2010best} for standard multi-armed bandits. However, $H_{2,\mathrm{lin}}$ is not larger than $H_2$ since $H_{2,\mathrm{lin}}$ is only a function of the first $d-1$ optimality gaps while $H_2$ considers all of the $K-1$ optimality gaps. In the extreme case that all of the suboptimal arms have the same optimality gaps, i.e., $\Delta_2=\Delta_3=\cdots=\Delta_K$, the two terms $H_2$ and $H_{2,\mathrm{lin}}$ can differ significantly. In general, we have
\begin{equation*}
    H_{2,\mathrm{lin}}\le H_{2} \le \frac K d H_{2,\mathrm{lin}}
\end{equation*}and both inequalities are essentially sharp, i.e., can be achieved by some linear bandit instances. This highlights a major difference between best arm identification in the fixed-budget setting for linear bandits and standard multi-armed bandits. Due to the linear structure, arms are correlated and we can estimate the mean reward of one arm with the help of the other arms. Thus, the hardness quantity $H_{2,\mathrm{lin}}$ is  only a function of the top $d$ arms rather than all the arms.

\subsection{Comparisons to other algorithms} \label{sec:comparisons}
We compare OD-LinBAI and other existing algorithms with respect to the algorithm design as well as the theoretical guarantees in the following.

\noindent\textbf{Comparisons to BayesGap \citep{hoffman2014correlation}.} 
\begin{enumerate}[label = (\roman*), noitemsep, nolistsep]
\item The model used in BayesGap \citep{hoffman2014correlation} is based on Bayesian linear bandits, where the unknown parameter vector $\theta^*$ is drawn from a known prior distribution $\mathcal{N}(0, \eta^{2} I)$ and the additive noise is required to be Gaussian. However, OD-LinBAI does not require these assumptions and the upper bound holds for any deterministic or random $\theta^* \in \mathbb R^d$.
\item The algorithm and theoretical guarantee of BayesGap explicitly require the knowledge of a hardness quantity 
$H_1 = \sum _{1\le i\le K} \Delta_i^{-2}$
to control the confidence region and then allocate exploration. However, this hardness quantity $H_1$ is almost always unknown to the agent in practice. In most practical applications, BayesGap has to estimate $H_1$ in an adaptive way, which works reasonably well in numerical experiments but lacks theoretical guarantees. 

\item  BayesGap's error probability  is upper bounded by
\begin{equation}
\label{equation_upper_BayesGap}
    \exp\left(-\Omega\left( \frac{T}{H_{1}}\right) \right)
\end{equation}which depends on $T$ and $H_1$. Compared with (\ref{equation_upper_BayesGap}), the upper bound of OD-LinBAI in (\ref{equation_upper_OD}) has an extra $\log_2 d$ term. 
This is an interesting phenomena which is also present in standard multi-armed bandits \citep{audibert2010best, carpentier2016tight}. For best arm identification in standard multi-armed bandits, without the knowledge of the hardness quantity $H_1$, the agent has to pay a price of $\log_2 K$ for the adaptation to the problem complexity. In Theorem~\ref{theorem_lowerbound}, we prove a similar result for linear bandits, in which the price of adaptation is $\log_2 d$.

The upper bound (\ref{equation_upper_BayesGap}) involves $H_1$,  a function of \emph{all} the optimality gaps. It holds that $H_1 \ge H_2 \ge H_{2,\mathrm{lin}}$. Thus, the upper bound of OD-LinBAI is not worse (and often better) in its dependence on the hardness/complexity parameter. %\red{BayesGap, at least in the theoretical analysis, does not fully utilize the linear structure of the bandit problem.} 
\end{enumerate}

\noindent\textbf{Comparisons to Peace \citep{katz2020empirical} (Also see Appendix~\ref{appendix_peace}).}
\begin{enumerate}[label = (\roman*), noitemsep, nolistsep]
\item To ensure there is only a \emph{single} arm in the final active set, the fixed-budget version of Peace requires $\gamma(\{a(i), a(1)\}) \ge 1$ for all suboptimal arms $i\ne1$ (where $\gamma(\cdot)$ is defined in \citet{katz2020empirical}). Note that this is not only a requirement for the theoretical bound but also a requirement for the \emph{feasibility} of the algorithm. If this inequality is not satisfied, the linear bandit instance needs to be “rescaled” before the algorithm is run, resulting in a larger bound on the error probability. In practice, the best arm is unknown and the rescaling factor can thus only be conservatively bounded as $\min_i \gamma(\{a(i), a(1)\}) \ge \min_{i,j} \gamma(\{a(i), a(j)\}) $. However, the latter quantity can be miniscule.  In particular, if there exist two arms that are nearly identical, i.e., $\min_{i,j} \gamma(\{a(i), a(j)\})$ is very small, the bound on the error probability may be larger than $1$, and hence vacuous. Besides, the algorithm may terminate with most of its time budget wasted. In contrast, OD-LinBAI is \emph{fully parameter-free} and does not require any information about the instance.

\item It is not straightforward to compare the error probabilities of OD-LinBAI and Peace in general since Peace involves some tricky terms that do not admit closed-form expressions. Here we consider the special case of standard multi-armed bandits (as discussed in Remark~\ref{rmk:standard}) with all optimality gaps equal to the minimal one $\Delta_1$. In this case $\rho^*=\Theta(\Delta_1^{-2}\cdot d)$, $\gamma^*=\Theta(\Delta_1^{-2}\cdot d\log d)$  and $\log (\gamma(\mathcal{Z}))=\Theta(\log d)$; these terms appear in the denominator of the exponent in Peace's bound on the error probability. Therefore, the error probability of Peace is $\exp \big(-\Omega( \frac{T\Delta_1^2}{d \log^2d })\big)$ while ours is $\exp \big(-\Omega( \frac{T\Delta_1^2}{d \log d })\big)$, which also shows Peace is \emph{not} minimax optimal in the exponent in view of our lower bounds, to be presented in Section~\ref{subsection_lower}. See Appendix~\ref{appendix_peace} for the precise details of the above derivations.
\end{enumerate}

\noindent\textbf{Comparisons to LinearExploration \citep{alieva2021robust} and GSE \citep{azizi2021fixed}.}
\begin{enumerate}[label = (\roman*), noitemsep, nolistsep]
\item The idea of elimination has been well-received and is ubiquitous in linear bandits. Although LinearExploration \citep{alieva2021robust}, GSE \citep{azizi2021fixed} and OD-LinBAI all leverage this idea, we emphasize that the elimination criteria for these algorithms are different. In particular, OD-LinBAI divides the time budget into roughly $\log_2d$ phases while the other algorithms divide the budget into roughly $\log_2K$ phases. Additionally, OD-LinBAI always controls the \emph{dimension} of the active set in each phase, using the dimensionality reduction techinique in Section~\ref{section_setup}.

\item The error probabilities of LinearExploration and GSE are upper bounded by $\exp \big(-\Omega( \frac{T}{\tilde H_2 \log_2 K })\big)$ and $\exp \big(-\Omega( \frac{T\Delta_1^2}{d \log_2 K })\big)$ respectively. Note that $K\ge d$, $H_{2,\mathrm{lin}} \le d / \Delta_1^2$, and the hardness quantity $\tilde H_2$ in \citet{alieva2021robust} is of the same order as $H_{2,\mathrm{lin}}$. Hence, our exponent of the bound on the error probability is an improvement over their exponents by a factor of $\Theta((\log_2 K)/(\log_2 d))$, which may be much larger than $1$.
\end{enumerate}

\subsection{Lower bound}
\label{subsection_lower}
Before stating the lower bound formally, we introduce
\begin{equation*}
   H_{1,\mathrm{lin}} = \sum _{1\le i\le d} \Delta_i^{-2}.
\end{equation*}This quantity is a generalization of $H_1$ that characterizes the difficulty of a linear bandit instance. 
This parameter is also associated with the top $d$ arms similarly to $H_{2,\mathrm{lin}}$. See Table~\ref{table_H} for a thorough comparison on different hardness quantities. 

% \begin{table}
%   \caption{Comparisons of different hardness quantities: $H_1$, $H_2$, $H_{1,\mathrm{lin}}$ and $H_{2,\mathrm{lin}}$.}
%   \label{table_H}
%   \centering
%   \begin{tabular}{lll}
%   \hline
%     $H_{1} = \sum _{1\le i\le K} \Delta_i^{-2}$ & $H_{2} = \max_{2\leq i \leq K} {i}\cdot {\Delta_i^{-2}}$  & $1\le H_{1}/H_{2} \le \log (2K)$ \citep{audibert2010best}     \\[2pt]
%     \hline
%     $H_{1,\mathrm{lin}} = \sum _{1\le i\le d} \Delta_i^{-2}$ & $H_{2,\mathrm{lin}} = \max_{2\leq i \leq d} {i}\cdot {\Delta_i^{-2}}$  & $1\le H_{1,\mathrm{lin}}/H_{2,\mathrm{lin}} \le \log (2d) $     \\[2pt]
%     \hline
%     $1 \le H_{1}/H_{1,\mathrm{lin}} \le K/ d $     & $1 \le H_{2}/H_{2,\mathrm{lin}} \le K /d $     &  \\
%     \hline
%   \end{tabular}
% \end{table}

\begin{table}
  \caption{Comparisons of different hardness quantities: $H_1$, $H_2$, $H_{1,\mathrm{lin}}$ and $H_{2,\mathrm{lin}}$.}
  \label{table_H}
  \centering
  \begin{tabular}{lll}
    \toprule
    $H_{1} = \sum _{1\le i\le K} \Delta_i^{-2}$ & $H_{2} = \max_{2\leq i \leq K} {i}\cdot {\Delta_i^{-2}}$  & $1\le H_{1}/H_{2} \le \log (2K)$ \citep{audibert2010best}     \\
    \midrule
    $H_{1,\mathrm{lin}} = \sum _{1\le i\le d} \Delta_i^{-2}$ & $H_{2,\mathrm{lin}} = \max_{2\leq i \leq d} {i}\cdot {\Delta_i^{-2}}$  & $1\le H_{1,\mathrm{lin}}/H_{2,\mathrm{lin}} \le \log (2d) $     \\
    \midrule
    $1 \le H_{1}/H_{1,\mathrm{lin}} \le K/ d $     & $1 \le H_{2}/H_{2,\mathrm{lin}} \le K /d $     &  \\
    \bottomrule
  \end{tabular}
  %\vspace{-0.3cm}
\end{table}

For any linear bandit instance $\nu \in \mathcal E$, we denote the hardness quantity $H_{1,\mathrm{lin}}$ of $\nu$ as $H_{1,\mathrm{lin}}(\nu)$.\footnote{When there is no ambiguity, $H_{1,\mathrm{lin}}$ will also be used.} In addition, let $\mathcal E(h)$ denote the set of linear bandit instances in $\mathcal E$ whose hardness parameter $H_{1,\mathrm{lin}}$ is upper bounded by $h$ (for some $h>0$), i.e., $\mathcal E(h) = \{\nu \in \mathcal E:  H_{1,\mathrm{lin}}(\nu) \le h\}$.

\begin{theorem}
\label{theorem_lowerbound}
If $T \ge h^{2} \log (6 T d) /900$, then 
\begin{equation*}
    \min_{\Pi} \max_{\nu \in \mathcal E(h)} \Pr\left[i_{\mathrm{out}}^{\Pi} \neq 1 \right] \ge \frac 1 6 \exp \left( - \frac {240T} h \right).
\end{equation*}Further if $h \ge 15 d^2$, then 
\begin{equation*}
    \min_{\Pi} \max_{\nu \in \mathcal E(h)} \left( \Pr\left[i_{\mathrm{out}}^{\Pi} \neq 1 \right] \cdot \exp \left(  \frac {2700T} {H_{1,\mathrm{lin}}(\nu) \log_2d } \right) \right)\ge \frac 1 6.
\end{equation*}
\end{theorem}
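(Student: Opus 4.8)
The plan is to obtain the lower bound by embedding hard \emph{standard} multi-armed bandit instances into the linear bandit model and then quoting the minimax lower bound of \citet{carpentier2016tight}. The embedding is exactly the one hinted at in the remark on standard multi-armed bandits following Algorithm~\ref{algo1}: given any $d$-armed standard bandit with mean vector $(p(1),\dots,p(d))$, take $K=d$ and assign to arm $i$ the arm vector $a(i)=e_i$, the $i$-th standard basis vector of $\mathbb R^d$. Setting $\theta^*=(p(1),\dots,p(d))^\top$ gives $\braket{\theta^*,a(i)}=p(i)$, and since $e_1,\dots,e_d$ span $\mathbb R^d$ the effective dimension is exactly $d$. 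Because there are only $d$ arms, the top-$d$ gaps are all the gaps, so the quantity $H_{1,\mathrm{lin}}$ of the embedded instance coincides with the standard hardness $H_1=\sum_{i=1}^d \Delta_i^{-2}$ of the underlying $d$-armed bandit. Consequently every $d$-armed bandit with $H_1\le a$ embeds as an element of $\mathcal E(a)$; denote this family of embedded instances by $\mathcal E_0(a)\subseteq\mathcal E(a)$.

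First I would reduce the linear-bandit minimax problem to a standard-bandit minimax problem. Since $\mathcal E_0(a)\subseteq\mathcal E(a)$, for every policy $\pi$ we have $\max_{\nu\in\mathcal E(a)}\Pr[i_{\mathrm{out}}^\pi\ne 1]\ge\max_{\nu\in\mathcal E_0(a)}\Pr[i_{\mathrm{out}}^\pi\ne 1]$, and this inequality survives taking $\min_\pi$ on both sides. The key observation is that on an embedded instance the reward seen at time $t$ is $X_t=\braket{\theta^*,e_{A_t}}+\eta_t=p(A_t)+\eta_t$, which is exactly the reward observed when pulling arm $A_t$ in the underlying standard bandit; hence the information available to a linear-bandit policy is indistinguishable from that of a standard-bandit policy, and the two policy classes coincide as maps from observation histories to actions and to a final recommendation $i_{\mathrm{out}}\in[d]$. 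Therefore $\min_\pi\max_{\nu\in\mathcal E_0(a)}\Pr[i_{\mathrm{out}}^\pi\ne 1]$ equals the minimax error of best-arm identification for $d$-armed \emph{standard} bandits over the class with $H_1\le a$, and the whole problem is handed off to the standard-bandit theory.

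Next I would invoke \citet{carpentier2016tight} with the number of arms set to $K=d$. Their coarser lower bound yields the $\frac16\exp(-240T/a)$ estimate under the budget condition $T\ge a^2\log(6Td)/900$, while their refined, instance-dependent lower bound yields, for $a\ge 15d^2$, the existence for each $\pi$ of an instance $\nu\in\mathcal E_0(a)$ with $\Pr[i_{\mathrm{out}}^\pi\ne1]\ge\frac16\exp(-2700T/(H_{1,\mathrm{lin}}(\nu)\log_2 d))$, which is precisely the claimed product form. The $\log_2 d$ factor is exactly the $\log_2 K$ penalty in their $K$-armed bound specialised to $K=d$, i.e. the ``price of adaptation'' alluded to after Theorem~\ref{theorem_upperbound}.

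The main obstacle is bookkeeping rather than conceptual: I must verify that the specific hard instances used in the Carpentier--Locatelli construction (their particular gap profiles and the scaling of the gaps with $T$, $a$ and $d$) indeed lie in $\mathcal E_0(a)$ -- that their complexity stays below $a$, and that the regularity conditions they impose translate into the stated hypotheses, so that their ``$K$ not too large relative to $H$'' requirement becomes $a\ge 15d^2$ and their budget condition becomes $T\ge a^2\log(6Td)/900$ -- and that their absolute constants reproduce the explicit $240$, $2700$, and $\tfrac16$ once the $1$-subgaussian noise normalisation used here is accounted for. This amounts to carefully restating their theorem with $K\mapsto d$ and tracking constants; the embedding and the reduction of policy classes are essentially immediate.
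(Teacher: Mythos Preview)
Your approach is essentially the paper's: embed a standard $d$-armed bandit by taking arm vectors $e_1,\dots,e_d$, observe that on such instances $H_{1,\mathrm{lin}}=H_1$ and that linear-bandit and standard-bandit policies coincide, and then quote Carpentier--Locatelli with the number of arms set to $d$.

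The one place the paper differs is that it does not take $K=d$. Since $\mathcal E$ is defined for a fixed arm set $[K]$ with $K\ge d$, the paper keeps all $K$ arms and sets the arm vectors of arms $d+1,\dots,K$ to the zero vector (so their rewards are~$0$); it then argues that an agent granted the extra information ``all means are nonnegative'' would immediately discard these arms and never pull them (they carry no information), reducing to the $d$-armed standard problem, and removing this side information can only increase the error probability. If in your reading of $\mathcal E$ the number of arms is allowed to vary, your $K=d$ shortcut is fine; if $K$ is fixed and possibly larger than $d$, you need to insert this padding-plus-side-information step, which is the only nontrivial addition beyond what you wrote.
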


The proof of Theorem~\ref{theorem_lowerbound} is deferred to Appendix~\ref{appendix_lowerbound}. We emphasize that even though the proof of the lower bound follows some  common ideas behind the proofs of most minimax lower bounds in bandit algorithms for various purposes, its value does not lie  in its technical novelty, but rather that the result is  {\em tight} vis-\`a-vis the upper bound we have derived based on the OD-LinBAI algorithm.  The usual strategy, which is the strategy we adopt here, is to construct and analyze specific hard instances. In particular, we leverage the instances in \citet{carpentier2016tight} for standard multi-armed bandits to construct hard linear bandit instances for any arbitrary $K$ and $d$. %We emphasize that the value of our lower bound does not lie in the technical novelty of its proof, but rather its {\em tightness} vis-\`a-vis the upper bound we have derived based on the OD-LinBAI algorithm. 
We discuss the  tightness of the lower bound in the following.

%Herein we emphasize that the significance of any lower bound lies in its tightness for the problem at hand.}

Theorem~\ref{theorem_lowerbound} first shows that for any best arm identification algorithm $\Pi$, even with the knowledge of an upper bound $h$ on the hardness quantity $H_{1, \mathrm{lin}}$, there exists a linear bandit instance such that the error probability is at least
\begin{equation}
\label{equation_lower_1}
   \exp\left(-O\left( \frac{T}{h}\right) \right).
\end{equation}Furthermore, for any best arm identification algorithm $\Pi$, without the knowledge of an upper bound $h$ on the hardness quantity $H_{1, \mathrm{lin}}$, there exists a linear bandit instance $\nu$ such that the error probability is at least
\begin{equation}
\label{equation_lower_2}
   \exp\left(-O\left( \frac{T}{H_{1,\mathrm{lin}}(\nu) \log_2d }\right) \right).
\end{equation}Comparing the lower bounds (\ref{equation_lower_1}) and (\ref{equation_lower_2}) in two different settings, we show that the agent has to pay a price of $\log_2 d$ in the absence of the knowledge about the problem complexity. Finding a best arm identification algorithm that matches the lower bound (\ref{equation_lower_1}) remains an open problem since the upper bound of BayesGap (\ref{equation_upper_BayesGap}) involves $H_1$ but not $H_{1, \mathrm{lin}}$. However, notice that the knowledge about the complexity quantity which is required for BayesGap is usually unavailable in real-life applications.

Now we compare the upper bound on the error probability of OD-LinBAI in (\ref{equation_upper_OD}) with the lower bound (\ref{equation_lower_2}). Table~\ref{table_H} shows that $H_{1, \mathrm{lin}} \ge H_{2, \mathrm{lin}}$ always holds. Therefore, the upper bound in~(\ref{equation_upper_OD}) is  not larger than the lower bound in~(\ref{equation_lower_2}) in the exponent up to absolute constants. This shows OD-LinBAI (Algorithm~\ref{algo1}) is \emph{minimax optimal} up to multiplicative factors   in the exponent and the upper bound cannot be improved in an order-wise sense in the exponent in general. At the same time, note that the upper bound holds for {\em all} instances while the lower bound is a minimax result which holds for {\em specific} instances. Since an upper bound can never be smaller than a lower bound, we know that the difficult instances for the problem of  best arm identification in linear bandits in the fixed-budget setting are those whose $H_{1, \mathrm{lin}}$ and $H_{2, \mathrm{lin}}$ are of the same order.

\section{Numerical experiments}
\label{section_exp}

In this section, we evaluate the performance of our algorithm OD-LinBAI and compare it with Sequential Halving \citep{karnin2013almost}, BayesGap \citep{hoffman2014correlation}, Peace \citep{katz2020empirical}, LinearExploration \citep{alieva2021robust} and GSE \citep{azizi2021fixed}.  %Sequential Halving \citep{karnin2013almost} is a state-of-the-art algorithm for best arm identification in standard multi-armed bandits. 
For BayesGap, there are two versions: one is BayesGap-Oracle, which is given the exact information of the required hardness quantity $H_1$; the other is BayesGap-Adaptive, which adaptively estimates the hardness quantity by the three-sigma rule. In each setting, the reported error probabilities of different algorithms are averaged over $1024$ independent trials and the (tiny) error bars indicate the standard errors of the error probabilities. %\blue{(which is given by $\sqrt{\hat{\mu}(1-\hat{\mu} )/N}$ where $N=1024$ is the number of independent trials)} 
We present the results of one synthetic dataset here.
Additional implementation details and numerical results (including another synthetic dataset, one real-world dataset and comparison to the recent LT\&S algorithm for best arm identification in linear bandits with fixed confidence \citep{jedra2020optimal}) are provided in Appendix~\ref{appendix_exp}. 

\subsection{Synthetic dataset 1: a hard instance} \label{subsec:syn1}

\begin{wrapfigure}{r}{0.6\textwidth}%min0.58
    %\vskip 0.2in
	\centering
	\vspace{-0.2cm}
    \begin{minipage}[t]{0.495\linewidth}
		\centering
		\includegraphics[width=1\textwidth]{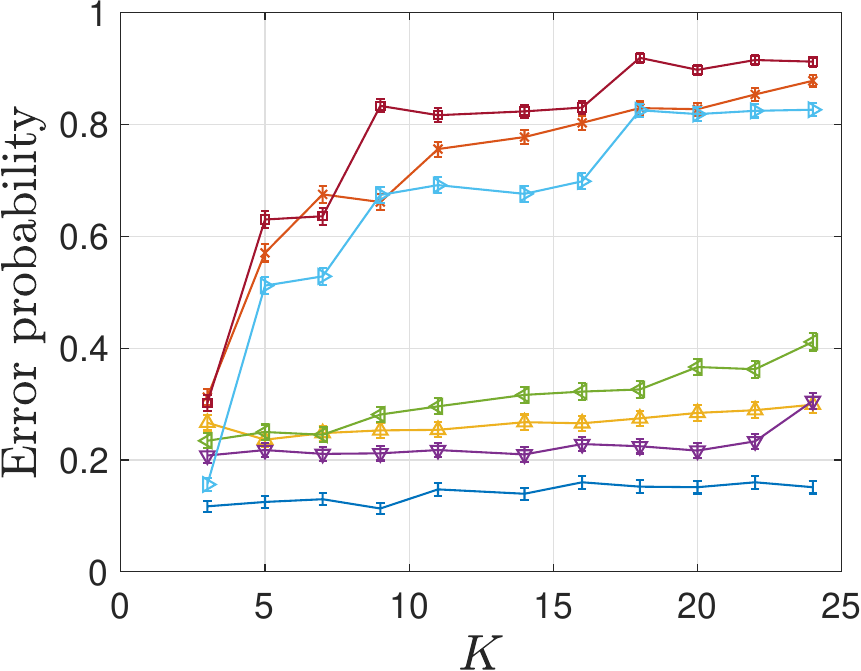}
	\end{minipage}
	\begin{minipage}[t]{0.495\linewidth}
		\centering
		\includegraphics[width=1\textwidth]{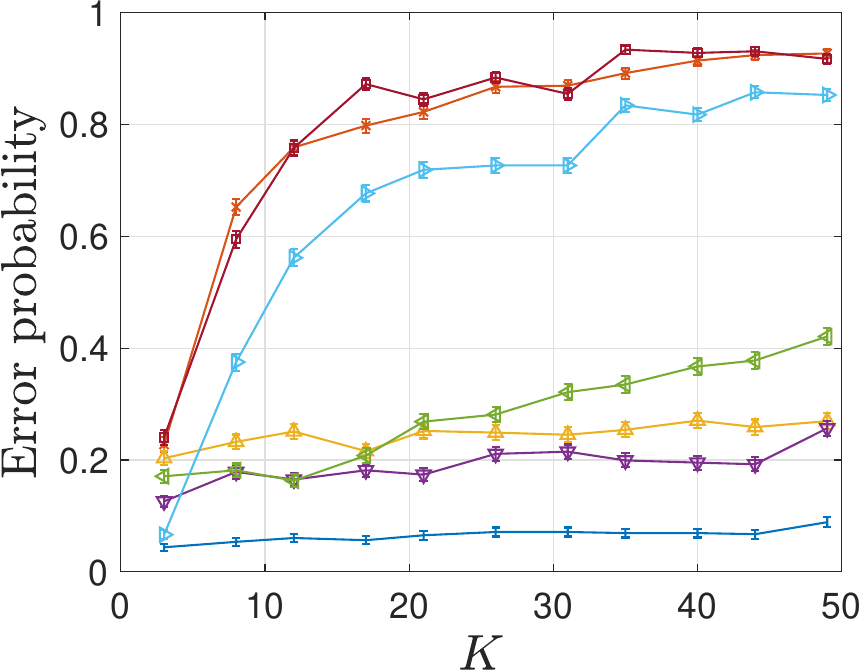}
	\end{minipage}
	\begin{minipage}[t]{1\linewidth}
		\centering
		\includegraphics[width=1\textwidth]{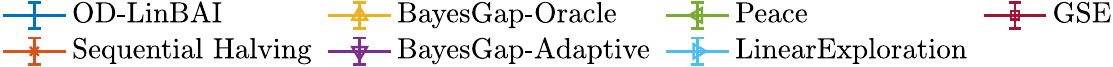}
	\end{minipage}
	\vspace{-0.15cm}
	\caption{Error probabilities for different numbers of arms $K$ with  $T=25, 50$ from left to right.}
	\label{fig_dataset1_1}
	\centering
	\vspace{0.3cm}
    \begin{minipage}[t]{0.495\linewidth}
		\centering
		\includegraphics[width=1\textwidth]{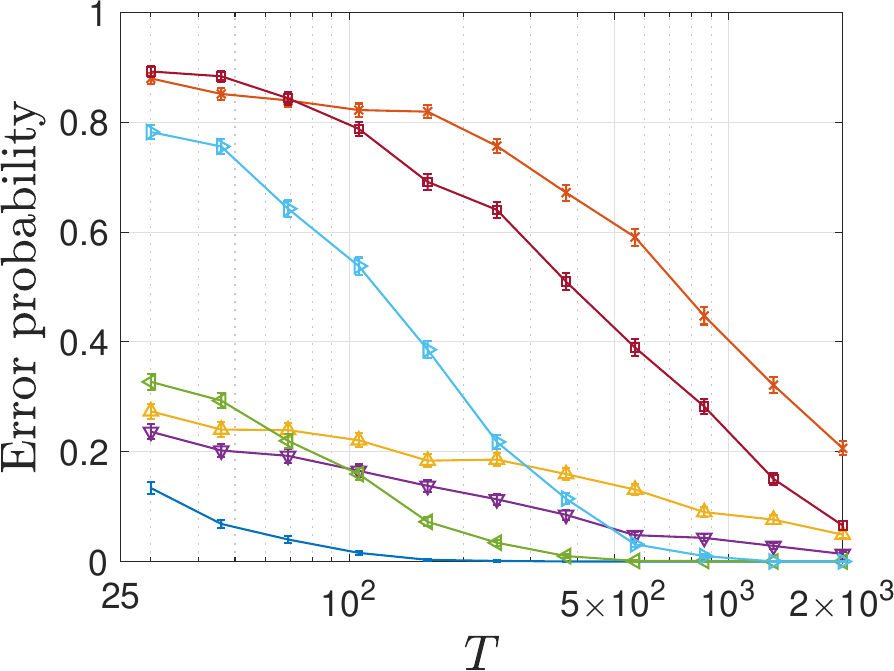}
	\end{minipage}
	\begin{minipage}[t]{0.495\linewidth}
		\centering
		\includegraphics[width=1\textwidth]{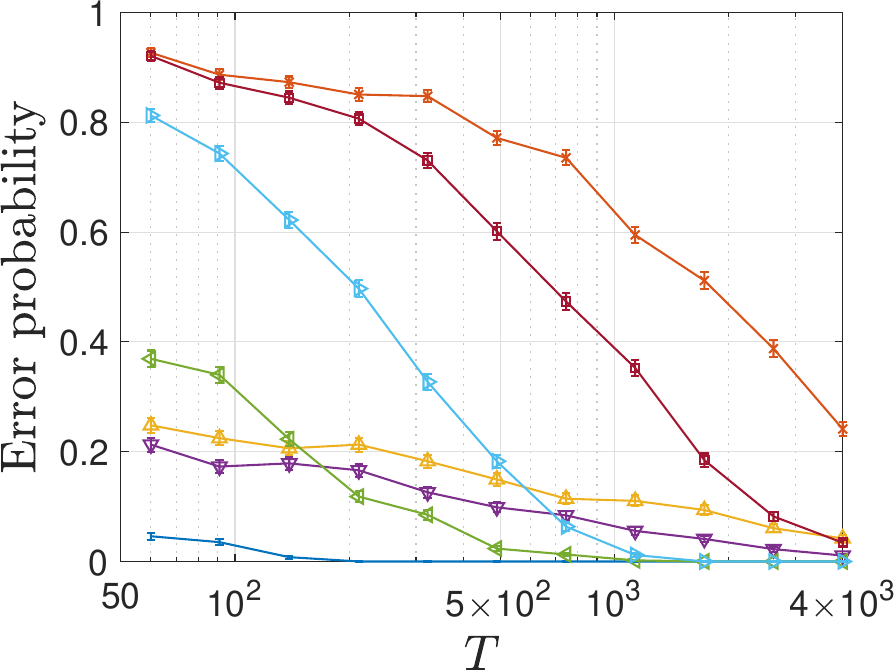}
	\end{minipage}
	\begin{minipage}[t]{1\linewidth}
		\centering
		\includegraphics[width=1\textwidth]{legend.pdf}
	\end{minipage}
	\vspace{-0.15cm}
	\caption{Error probabilities for different time budgets $T$ with  $K=25, 50$ from left to right.}
	\label{fig_dataset1_2}
	\vspace{-0.5cm}
\end{wrapfigure}

This benchmark dataset, in which there are numerous competitors for the second best arm, was considered for the problem of best arm identification in linear bandits in the fixed-confidence setting \citep{zaki2019towards, fiez2019sequential,jedra2020optimal}. Similarly, we consider the situation that $d = 2$ and $K\ge 3$. We assume that the additive random noise follows the standard Gaussian distribution $\mathcal{N}(0, 1)$. For simplicity, we set the unknown parameter vector $\theta^* = [1,0]^{\top}$. There is  one best arm and one worst arm, which correspond to the arm vectors $a(1)=[1,0]^{\top}$ and $a(K) = [\cos( {3\pi/4}),\sin( {3\pi/4})]^{\top}$ respectively. For any arm $i \in \{2,3,\dots,K-1\}$, the corresponding arm vector is chosen to be $a(i) = [\cos( {\pi/4+\phi_{i}}),\sin( {\pi/4}+\phi_{i})]^{\top}$ with $\phi_{i}$ drawn independently from $ \mathcal{N}(0, 0.09^2)$. Therefore, there are $K-2$ almost second best arms. Considering the definitions of four hardness quantities, it holds that  $H_{1} \approx H_{2} \approx \frac K d H_{1,\mathrm{lin}} \approx \frac K d H_{2,\mathrm{lin}}$.
Hence this is a hard instance in the sense that the linear structure is extremely strong. A good algorithm needs to fully utilize the correlations of the arms to obtain information as efficiently as possible. %A good algorithm needs to fully utilize the correlations of the arms  to pull arms as efficiently as possible. 

The experimental results with fixed $T$ and $K$ are presented in Figure~\ref{fig_dataset1_1} and Figure~\ref{fig_dataset1_2} respectively. In terms of this hard linear bandit instance, OD-LinBAI is clearly superior compared to its competitors. In fact, OD-LinBAI consistently pulls only one arm from the $K-2$ almost second best arms and thus suffers minimal impact from the increase in $K$.

% \begin{figure}[ht]
%     %\vskip 0.2in
% 	\centering
% 	\begin{minipage}[t]{0.8\linewidth}
% 	\centering
%     \begin{minipage}[t]{0.495\linewidth}
% 		\centering
% 		\includegraphics[width=1\textwidth]{T=25.pdf}
% 	\end{minipage}
% 	\begin{minipage}[t]{0.495\linewidth}
% 		\centering
% 		\includegraphics[width=1\textwidth]{T=50.pdf}
% 	\end{minipage}
% 	%\vskip 0.1in
% 	\begin{minipage}[t]{1\linewidth}
% 		\centering
% 		\includegraphics[width=1\textwidth]{legend.pdf}
% 	\end{minipage}
% 	\end{minipage}
% 	\caption{Error probabilities for different numbers of arms $K$ with  $T=25, 50$ from left to right.}
% 	\label{fig_dataset1_1}
% 	\vskip 0.2in
% 	\centering
% 	\begin{minipage}[t]{0.8\linewidth}
% 	\centering
%     \begin{minipage}[t]{0.495\linewidth}
% 		\centering
% 		\includegraphics[width=1\textwidth]{K=25.pdf}
% 	\end{minipage}
% 	\begin{minipage}[t]{0.495\linewidth}
% 		\centering
% 		\includegraphics[width=1\textwidth]{K=50.pdf}
% 	\end{minipage}
% 	%\vskip 0.1in
% 	\begin{minipage}[t]{1\linewidth}
% 		\centering
% 		\includegraphics[width=1\textwidth]{legend.pdf}
% 	\end{minipage}
% 	\end{minipage}
% 	\caption{Error probabilities for different time budgets $T$ with  $K=25, 50$ from left to right.}
% 	\label{fig_dataset1_2}
% 	%\vskip -0.2in
% \end{figure}

\section{Conclusions and Future Work}
\label{section_conclusion}

We introduce the G-optimal design technique into the problem of best arm identification in linear bandits in the fixed-budget setting. We  design a parameter-free and efficient algorithm OD-LinBAI. To characterize the difficulty of a linear bandit instance, we introduce two hardness quantities $H_{1,\mathrm{lin}}$ and $H_{2,\mathrm{lin}}$. The upper bound of the error probability of OD-LinBAI and the minimax lower bound of this problem are respectively characterized by $H_{1,\mathrm{lin}}$ and $H_{2,\mathrm{lin}}$ instead of their analogues $H_1$ and $H_2$ in standard multi-armed bandits. For the first time,  minimax optimality  (up to constant multiplicative factors in the exponent) has been achieved in this problem. While we submit that the ingredients that constitute OD-LinBAI are not surprising in the bandit literature, an open problem thus far has hence been  solved in this contribution (by the careful derivation of an upper bound on the error probability of OD-LinBAI and an accompanying minimax lower bound). Our theoretical findings are  also supported by the considerable improvements of the empirical performance of OD-LinBAI vis-\`a-vis  existing algorithms on benchmark datasets.

A direction for future work is to design an instance-dependent asymptotically optimal algorithm for this problem. However, finding  such an algorithm or  an instance-dependent asymptotic lower bound for the problem of best arm identification in standard (i.e., $K$-armed) multi-armed bandits in the fixed-budget setting remains open. 
Finally, as Thompson sampling \citep{thompson1933likelihood, agrawal2012analysis} has been successfully extended to pure exploration in standard multi-armed bandits \citep{russo2016simple, shang2020fixed, qin2022adaptivity,jourdan2022top}, it is interesting to study whether this technique can be generalized to {\em  linear} bandits,  in both  the fixed-budget and fixed-confidence settings.

\begin{ack}
This research/project is supported by the National Research Foundation Singapore and DSO National Laboratories under the AI Singapore Programme (AISG Award No: AISG2-RP-2020-018)
 %by  a 
 % Singapore National Research Foundation (NRF) Fellowship (A-0005077-01-00) 
  and by Singapore Ministry of Education (MOE) AcRF Tier 1 Grants (A-0009042-01-00 and A-8000189-01-00).
\end{ack}

\newpage
\bibliographystyle{unsrtnat}
\bibliography{references}

\newpage
\appendix

\section{More discussions on the G-optimal design}
\label{appendix_goptimal}
\paragraph{$\epsilon$-approximate G-optimal design.} For the problem of best arm identification in linear bandits in the fixed-budget setting, it is sufficient to compute an $\epsilon$-approximate G-optimal design with minimal impact on performance. For an $\epsilon$-approximate optimal design $\pi$, $g(\pi) \le (1+\epsilon)d$. \citet{todd2016minimum} shows that such a design can be computed within $4d(\log\log d +7/2)+28d/\epsilon$ iterations by the Frank--Wolfe algorithm with a specific initialization. If we only compute $\epsilon$-approximate G-optimal designs in OD-LinBAI (Algorithm~\ref{algo1}), the upper bound on the error probability will only deteriorate by a factor of $(1+\epsilon)$ as follows.

\begin{theorem}
\label{theorem_appendix1}
For any linear bandit instance $\nu \in \mathcal E$, OD-LinBAI, using $\epsilon$-approximate G-optimal designs, outputs an arm $i_{\mathrm{out}}$ satisfying
$$
\Pr\left[i_{\mathrm{out}} \neq 1 \right] \le
\left( \frac {4K} { d}+3\log _{2} d\right) \exp \left( - \frac {m} {32(1+\epsilon) H_{2,\mathrm{lin}}} \right)
$$
where  $m$ is defined in Equation~(\ref{equation_m}).
\end{theorem}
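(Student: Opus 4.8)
The plan is to re-run the proof of Theorem~\ref{theorem_upperbound} essentially verbatim, tracking the one and only place where the exact G-optimality of each phase design $\pi_r$ is invoked and replacing it with the weaker approximate guarantee. Concretely, the exact analysis uses Theorem~\ref{theorem_goptimal} to assert $g(\pi_r)=d_r$; for an $\epsilon$-approximate design the definition gives only $g(\pi_r)\le(1+\epsilon)d_r$. Since this quantity enters the bound solely through the denominator of an exponent, the substitution costs exactly a multiplicative factor of $(1+\epsilon)$ there and leaves everything else intact. No new probabilistic estimate is required.

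I would first isolate the mechanism by which $g(\pi_r)$ surfaces. Because $T_r(i)=\lceil \pi_r(a_r(i))\cdot m\rceil\ge \pi_r(a_r(i))\cdot m$, the empirical design matrix dominates the scaled optimal-design matrix, $V_r\succeq m\,V(\pi_r)$, whence $V_r^{-1}\preceq \frac{1}{m}V(\pi_r)^{-1}$ and $\|a_r(i)\|^2_{V_r^{-1}}\le \frac{1}{m}\|a_r(i)\|^2_{V(\pi_r)^{-1}}\le \frac{g(\pi_r)}{m}$ for every active arm $i$. Feeding $g(\pi_r)\le(1+\epsilon)d_r$ into Proposition~\ref{prop_concentration_estimator}, the event that a suboptimal arm $i$ overtakes arm $1$ within phase $r$ is controlled by the deviation $\langle \hat\theta_r-\theta^*,\,a_r(i)-a_r(1)\rangle$ against $\Delta_i$, with variance proxy $\|a_r(i)-a_r(1)\|^2_{V_r^{-1}}\le \frac{4(1+\epsilon)d_r}{m}$; this yields a per-arm misidentification probability of order $\exp\big(-\frac{m\Delta_i^2}{8(1+\epsilon)d_r}\big)$.

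From here every remaining ingredient of the original argument is untouched: the geometric decay $d_r\le\lceil d/2^{r-1}\rceil$ together with the counting of how many active arms must beat arm $1$ to eliminate it, which convert $d_r/\Delta_i^2$ into the quantity $H_{2,\mathrm{lin}}=\max_{2\le i\le d} i/\Delta_i^2$; the union bound over active arms contributing the prefactor $\frac{4K}{d}+3\log_2 d$; and the sum over the $\lceil\log_2 d\rceil$ phases. Carrying the single extra $(1+\epsilon)$ through the denominator turns $\exp(-m/(32H_{2,\mathrm{lin}}))$ into $\exp(-m/(32(1+\epsilon)H_{2,\mathrm{lin}}))$, which is the claim. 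The main obstacle is not an estimate but a verification: one must confirm that the $\epsilon$-slack genuinely localizes to $g(\pi_r)$ and nowhere else, in particular that the ceiling rounding in $T_r(i)$, the budget accounting of Lemma~\ref{lemma_m}, and the recursive dimensionality reduction (which depends only on the spans of the active sets, not on the chosen design) all remain valid for any feasible approximate $\pi_r$.
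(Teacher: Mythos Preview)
Your proposal is correct and mirrors the paper's argument exactly: the paper does not give a separate proof of this statement but simply observes that re-running the proof of Theorem~\ref{theorem_upperbound} with $g(\pi_r)\le(1+\epsilon)d_r$ in place of $g(\pi_r)=d_r$ (the single use of Theorem~\ref{theorem_goptimal}, at line~(\ref{lemma1_4}) of Lemma~\ref{lemma1}) yields the extra $(1+\epsilon)$ factor in the exponent, leaving Lemma~\ref{lemma_m}, Lemma~\ref{lemma2}, and the final union bound unchanged.
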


\paragraph{Near-optimal design with smaller support.} Recently, a near-optimal design with smaller support was proposed in \citet{lattimore2020learning}. In detail, there exists a design $\pi: \{a(i):i\in \mathcal A \} \rightarrow[0,1]$ such that $g(\pi) \le 2d$ and
$|\operatorname{Supp}(\pi)| \leq 4d(\log\log d +11)$. \citet{todd2016minimum} shows that such a design can be computed within $4d(\log\log d +21/2)$ iterations by the Frank--Wolfe algorithm with a specific initialization. Since the support of the design is smaller when $d$ is large, we can choose a larger $m$ in OD-LinBAI while the total budget consumed by the agent is still bounded by $T$. In particular, we can choose the parameter $m$ as 
\begin{equation}
\label{equation_m2}
    m = \frac {T-\min(K, 4d(\log\log d +11)) -\sum\limits_{r=1}^{{\lceil\log _{2}  d \rceil}-1} {\left\lceil\frac {d} {2^{r}}\right\rceil}} {{\lceil\log _{2}  d \rceil} }.
\end{equation}
The error probability can be bounded as follows.
\begin{theorem}
\label{theorem_appendix2}
For any linear bandit instance $\nu \in \mathcal E$, OD-LinBAI, using near-optimal designs with smaller support, outputs an arm $i_{\mathrm{out}}$ satisfying
$$
\Pr\left[i_{\mathrm{out}} \neq 1 \right] \le
\left( \frac {4K} { d}+3\log _{2} d\right) \exp \left( - \frac {m} {64 H_{2,\mathrm{lin}}} \right)
$$
where  $m$ is defined in Equation~(\ref{equation_m2}).
\end{theorem}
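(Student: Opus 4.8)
The plan is to rerun the proof of Theorem~\ref{theorem_upperbound} essentially verbatim, isolating the single place where the exact guarantee $g(\pi^*)=d$ of Theorem~\ref{theorem_goptimal} is invoked and substituting the weaker near-optimal guarantee $g(\pi_r)\le 2d_r$. First I would recall why the design quality enters the analysis only through $g$. In phase $r$ the rounded allocation satisfies $T_r(i)=\lceil \pi_r(a_r(i))\,m\rceil\ge \pi_r(a_r(i))\,m$, so $V_r\succeq m\,V(\pi_r)$ and hence $\|a_r(i)\|_{V_r^{-1}}^2\le g(\pi_r)/m$ for every active arm. Applying Proposition~\ref{prop_concentration_estimator} to $x=a_r(i)-a_r(1)$, together with $\|a_r(i)-a_r(1)\|_{V_r^{-1}}^2\le 4g(\pi_r)/m$, gives the per-arm estimate $\Pr[\hat p_r(i)\ge\hat p_r(1)]\le\exp(-\Delta_i^2 m/(8g(\pi_r)))$. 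Since the near-optimal design of \citet{lattimore2020learning} yields $g(\pi_r)\le 2d_r$ in the reduced space $\mathbb R^{d_r}$ of phase $r$ (rather than $g(\pi_r)=d_r$), every such exponent is scaled by exactly $1/2$.

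Second, I would replay the Sequential-Halving-style counting from the original proof. If arm $1$ survives into phase $r$ but is eliminated there, then at least $\lceil d/2^r\rceil$ active arms beat it empirically; splitting these by true index shows that at least half of them have index exceeding $\lceil\lceil d/2^r\rceil/2\rceil$, hence gap at least $\Delta_{\lceil\lceil d/2^r\rceil/2\rceil+1}$. A Markov bound on the number of such arms, combined with the per-arm estimate above, the inequality $\Delta_i^2\ge i/H_{2,\mathrm{lin}}$ for $2\le i\le d$ supplied by the definition of $H_{2,\mathrm{lin}}$, and the dimension decay $d_r\le\lceil d/2^{r-1}\rceil$, turns the per-arm bounds into a per-phase failure probability of order $\exp(-m/(c\,H_{2,\mathrm{lin}}))$. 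In the original proof $g(\pi_r)=d_r$ yields $c=32$; the factor-$2$ loss from $g(\pi_r)\le 2d_r$ doubles the denominator to give $c=64$. Crucially, the prefactor is assembled from $K$, $\lceil d/2^r\rceil$ and the phase count alone, so summing over the $\lceil\log_2 d\rceil$ phases reproduces the unchanged factor $4K/d+3\log_2 d$.

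Third, I would verify the budget analogue of Lemma~\ref{lemma_m} for the new $m$ in Equation~(\ref{equation_m2}). Using $T_r=\sum_i\lceil\pi_r(a_r(i))m\rceil\le m+|\operatorname{Supp}(\pi_r)|$, the smaller-support guarantee $|\operatorname{Supp}(\pi_1)|\le\min(K,4d(\log\log d+11))$ in the first phase, and $|\operatorname{Supp}(\pi_r)|\le|\mathcal A_{r-1}|=\lceil d/2^{r-1}\rceil$ thereafter, the total number of pulls is at most $\lceil\log_2 d\rceil\,m+\min(K,4d(\log\log d+11))+\sum_{r=1}^{\lceil\log_2 d\rceil-1}\lceil d/2^r\rceil$, which equals $T$ exactly because Equation~(\ref{equation_m2}) differs from Equation~(\ref{equation_m}) only by replacing the first-phase support budget $d(d+1)/2$ with $4d(\log\log d+11)$.

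I expect the only genuine subtlety, as opposed to the routine factor-of-two bookkeeping, to be confirming that the factor $2$ in $g(\pi_r)\le 2d_r$ is truly the sole quantitative change. This requires checking that the near-optimal design still spans $\mathbb R^{d_r}$, so that $V_r$ is invertible and Proposition~\ref{prop_concentration_estimator} remains applicable (which holds because $g(\pi_r)<\infty$ forces $V(\pi_r)$ to be full rank), that the reduced-support bound is correctly invoked with the phase dimension $d_r$ rather than the ambient dimension, and that for $r\ge2$ the per-phase budget overhead stays controlled by the number of active arms independently of which design is used.
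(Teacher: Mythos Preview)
Your proposal is correct and follows exactly the approach the paper (implicitly) takes: the theorem is stated without proof in Appendix~\ref{appendix_goptimal} as a direct variant of Theorem~\ref{theorem_upperbound}, and the only substantive changes are precisely the two you identify---replacing $g(\pi_r)=d_r$ by $g(\pi_r)\le 2d_r$ in line~(\ref{lemma1_4}) of Lemma~\ref{lemma1} (turning $32$ into $64$), and replacing the first-phase support bound $d(d+1)/2$ by $4d(\log\log d+11)$ in Lemma~\ref{lemma_m} to justify the new $m$ of Equation~(\ref{equation_m2}). Your sketch of the Sequential-Halving counting is slightly looser than the paper's Lemma~\ref{lemma2} (which works with the explicit set $\mathcal B_r$ and the threshold $\lceil d/2^r\rceil-\lceil d/2^{r+1}\rceil+1$ rather than ``at least half''), but since $\lceil\lceil d/2^r\rceil/2\rceil=\lceil d/2^{r+1}\rceil$ the indices agree and, as you note, that part of the argument is unaffected by the design change anyway.
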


The proofs of Theorem~\ref{theorem_appendix1} and Theorem~\ref{theorem_appendix2} are very similar to Theorem~\ref{theorem_upperbound} and thus omitted; they only involve plugging the result of the approximate optimal design into the upper bound.

\section{Proof of Theorem~\ref{theorem_upperbound}}
\label{appendix_upperbound}
Before going to the proof of Theorem~\ref{theorem_upperbound}, we first introduce some useful lemmas. Lemma~\ref{lemma_m} shows Algorithm~\ref{algo1} is feasible in the sense that the total budget consumed by the agent is no more than $T$, and $i_{\mathrm{out}}$ is well-defined.

\begin{lemma} 
\label{lemma_m} 
With parameter $m$ defined as Equation~(\ref{equation_m}),  Algorithm~\ref{algo1} terminates in phase $\lceil\log _{2}  d\rceil$ with no more than a total of $T$ arm pulls. 
\end{lemma}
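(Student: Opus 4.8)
The plan is to verify that the total number of arm pulls executed by Algorithm~\ref{algo1} never exceeds the budget $T$, which amounts to bookkeeping on the phase lengths $T_r$ together with the definition of $m$ in Equation~(\ref{equation_m}). First I would bound the per-phase budget $T_r = \sum_{i\in\mathcal A_{r-1}} T_r(i)$. Since $T_r(i)=\lceil \pi_r(a_r(i))\cdot m\rceil$ and the ceiling adds at most one to each term, I would write
\begin{equation*}
    T_r = \sum_{i\in\mathcal A_{r-1}} \lceil \pi_r(a_r(i))\cdot m\rceil \le \sum_{i\in\mathcal A_{r-1}} \bigl(\pi_r(a_r(i))\cdot m + 1\bigr) = m + |\operatorname{Supp}(\pi_r)|,
\end{equation*}
where I used that $\pi_r$ is a probability distribution so $\sum_i \pi_r(a_r(i)) = 1$, and that only arms in the support contribute a ceiling overshoot (arms with $\pi_r(a_r(i))=0$ are pulled zero times). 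The key observation is that the number of support points controls the overshoot in each phase.

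Next I would control $|\operatorname{Supp}(\pi_r)|$ separately for the first phase and the later phases. In phase $r=1$, the algorithm explicitly restricts the G-optimal design so that $|\operatorname{Supp}(\pi_1)| \le d(d+1)/2$ by Theorem~\ref{theorem_goptimal}; moreover the support cannot exceed the number of active arms $|\mathcal A_0|=K$, so $|\operatorname{Supp}(\pi_1)| \le \min(K, d(d+1)/2)$. For phase $r\ge 2$, the active set has size $|\mathcal A_{r-1}| = \lceil d/2^{r-1}\rceil$, and since the support of any design is contained in the active arm set, $|\operatorname{Supp}(\pi_r)| \le \lceil d/2^{r-1}\rceil$. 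Therefore the total overshoot beyond $\lceil\log_2 d\rceil\cdot m$ is at most $\min(K, d(d+1)/2) + \sum_{r=2}^{\lceil\log_2 d\rceil}\lceil d/2^{r-1}\rceil = \min(K, d(d+1)/2) + \sum_{r=1}^{\lceil\log_2 d\rceil-1}\lceil d/2^{r}\rceil$, which is precisely the quantity subtracted in the numerator of Equation~(\ref{equation_m}).

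Assembling these pieces, the total budget consumed is
\begin{equation*}
    \sum_{r=1}^{\lceil\log_2 d\rceil} T_r \le \lceil\log_2 d\rceil\cdot m + \min\Bigl(K,\tfrac{d(d+1)}{2}\Bigr) + \sum_{r=1}^{\lceil\log_2 d\rceil-1}\Bigl\lceil\tfrac{d}{2^{r}}\Bigr\rceil = T,
\end{equation*}
where the final equality follows by substituting the definition of $m$ from Equation~(\ref{equation_m}). Finally I would confirm that $i_{\mathrm{out}}$ is well-defined by checking the elimination schedule terminates with a single arm: the active set shrinks as $|\mathcal A_r| = \lceil d/2^r\rceil$, so after $r=\lceil\log_2 d\rceil$ phases we have $\lceil d/2^{\lceil\log_2 d\rceil}\rceil = 1$, leaving exactly one arm. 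I do not expect a genuine obstacle here; the only point requiring mild care is the interaction of the ceiling function with the support-size bounds—in particular ensuring the per-phase overshoot is counted by $|\operatorname{Supp}(\pi_r)|$ rather than $|\mathcal A_{r-1}|$, and matching the first-phase term to $\min(K, d(d+1)/2)$ rather than the unconditional $d(d+1)/2$ so that the bound is not loosened when $K$ is small.
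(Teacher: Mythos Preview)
Your proposal is correct and follows essentially the same approach as the paper's proof: bound each $T_r$ by $m+|\operatorname{Supp}(\pi_r)|$, control the support size by $\min(K,d(d+1)/2)$ in the first phase and by $|\mathcal A_{r-1}|=\lceil d/2^{r-1}\rceil$ thereafter, and then sum so that the overshoot exactly matches the quantity subtracted in Equation~(\ref{equation_m}). One minor presentational nit: your displayed intermediate expression $\sum_{i\in\mathcal A_{r-1}}(\pi_r(a_r(i))\cdot m+1)$ literally equals $m+|\mathcal A_{r-1}|$, not $m+|\operatorname{Supp}(\pi_r)|$; the bound you want (and explain correctly in words) comes from noting that terms with $\pi_r(a_r(i))=0$ contribute $\lceil 0\rceil=0$ and applying $\lceil x\rceil\le x+1$ only on the support.
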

\begin{proof}
When $d = 2$, Algorithm~\ref{algo1} terminates in one phase. When $ d > 2$, by the property of ceiling function, we have $\frac 1 2 <\frac { d} {2^{\lceil\log _{2}  d \rceil}}\le 1$.
Thus, the number of arms in $ \mathcal A_{\lceil\log _{2} d \rceil-1}$ is ${\left\lceil\frac { d} {2^{\lceil\log _{2}  d \rceil-1}}\right\rceil}=2$, while the number of arms in $ \mathcal A_{\lceil\log _{2} d \rceil}$ is ${\left\lceil\frac { d} {2^{\lceil\log _{2}  d \rceil}}\right\rceil}=1$. As a result, Algorithm~\ref{algo1} always terminates in phase $\lceil\log _{2}  d\rceil$.

Now we bound the number of arm pulls. For any phase $r$, $|\operatorname{Supp}\left(\pi_{r}\right)|$ is always bounded by the cardinality of the active set $\mathcal A _{r-1}$. In particular, for the first phase, according to Theorem~\ref{theorem_goptimal}, there exists a G-optimal design $\pi_r$ with $|\operatorname{Supp}\left(\pi_{r}\right)| \leq d(d+1) / 2$.  Altogether, we have
\begin{equation*}
    |\operatorname{Supp}\left(\pi_{r}\right)| \le \begin{cases}
    \min(K, \frac { d ( d +1)} 2) & \text{when } r= 1 \\
    {\left\lceil\frac { d} {2^{r-1}}\right\rceil} & \text{when } r>1.
    \end{cases}
\end{equation*}
Then the number of total arm pulls is bounded as 
\begin{align}
     \sum_{r=1}^{\lceil\log _{2} d\rceil} T_r &= \sum_{r=1}^{\lceil\log _{2} d\rceil} \sum_{i \in  \mathcal A_r} T_r(i) \notag \\ 
     &= \sum_{r=1}^{\lceil\log _{2} d\rceil} \sum_{i \in  \mathcal A_r} \left\lceil \pi_r(a_r(i)) \cdot m \right\rceil \notag \\
     & \le \sum_{r=1}^{\lceil\log _{2} d\rceil} \left (  |\operatorname{Supp}\left(\pi_{r}\right)| + \sum_{i \in  \mathcal A_r}  \pi_r(a_r(i)) \cdot m   \right)  \label{lemma_m_1}\\
     &\le \min\left(K, \frac {d (d +1)} 2 \right) +\sum\limits_{r=2}^{{\lceil\log _{2} d \rceil}} {\left\lceil\frac {d} {2^{r-1}}\right\rceil} + \lceil\log _{2} d\rceil \cdot m \notag \\
     &= T \label{lemma_m_2}
\end{align}
where line (\ref{lemma_m_1}) follows from the property of ceiling function and line (\ref{lemma_m_2}) follows from the definition of $m$.
\end{proof}

Lemma~\ref{lemma1} bounds the probability that a certain arm has its estimate of the expected reward larger than that of the best arm in single phase $r$.
\begin{lemma} 
\label{lemma1} For a fixed realization of $\mathcal{A}_{r-1}$ satisfying $1\in\mathcal{A}_{r-1}$, for any arm $i \in \mathcal{A}_{r-1}$,
\begin{equation*}
\Pr\left[\hat{p}_{r}(1)<\hat{p}_{r}(i)\right] \leq \exp \left(-\frac{m \Delta_{i}^{2}}{8{\left\lceil\frac {d} {2^{r-1}}\right\rceil}} \right).
\end{equation*}
\end{lemma}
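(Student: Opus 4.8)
The plan is to write the phase-$r$ error event as a one-sided deviation of the least-squares estimator along a fixed direction, invoke the concentration inequality of Proposition~\ref{prop_concentration_estimator}, and then use the defining property of the G-optimal design to control the resulting variance proxy.

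First I would set up the bias and the relevant direction. Since projecting onto the reduced arm vectors preserves inner products (as established for the dimensionality-reduced arm vectors), there is a reduced parameter $\theta_r^*$ with $p(j)=\braket{\theta_r^*,a_r(j)}$, so $\hat p_r(j)=\braket{\hat\theta_r,a_r(j)}$ has mean $p(j)$ for $j\in\{1,i\}$. Writing $x=a_r(i)-a_r(1)$, the event $\{\hat p_r(1)<\hat p_r(i)\}$ is identical to $\{\braket{\hat\theta_r-\theta_r^*,x}>p(1)-p(i)\}=\{\braket{\hat\theta_r-\theta_r^*,x}>\Delta_i\}$. Conditioned on the active set $\mathcal A_{r-1}$, which is measurable with respect to the rewards of phases $1,\ldots,r-1$, the G-optimal design $\pi_r$ and hence the counts $T_r(j)$ and the matrix $V_r$ are deterministic, while the noise driving $\hat\theta_r$ comes entirely from phase $r$ and is independent of the conditioning. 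Thus Proposition~\ref{prop_concentration_estimator} applies with this fixed $V_r$ and direction $x$, yielding
$$
\Pr\left[\braket{\hat\theta_r-\theta_r^*,x}>\Delta_i \,\middle|\, \mathcal A_{r-1}\right]\le \exp\left(-\frac{\Delta_i^2}{2\|x\|^2_{V_r^{-1}}}\right).
$$

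Next I would bound the variance proxy $\|x\|^2_{V_r^{-1}}$. By the triangle inequality for the $V_r^{-1}$-norm together with $(u+v)^2\le 2(u^2+v^2)$, we get $\|x\|^2_{V_r^{-1}}\le 2\left(\|a_r(i)\|^2_{V_r^{-1}}+\|a_r(1)\|^2_{V_r^{-1}}\right)$. Because $T_r(j)\ge \pi_r(a_r(j))\,m$, we have $V_r\succeq m\,V(\pi_r)$ and hence $\|a_r(j)\|^2_{V_r^{-1}}\le \frac1m\|a_r(j)\|^2_{V(\pi_r)^{-1}}$; the G-optimality of $\pi_r$ (Theorem~\ref{theorem_goptimal}, item (3)) gives $\|a_r(j)\|^2_{V(\pi_r)^{-1}}\le g(\pi_r)=d_r$. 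Combining these yields $\|x\|^2_{V_r^{-1}}\le 4d_r/m$, so the exponent becomes $-m\Delta_i^2/(8d_r)$. Finally, the active set $\mathcal A_{r-1}$ spans a subspace of dimension $d_r$ at most $\lceil d/2^{r-1}\rceil$ — for $r=1$ this is the ambient dimension $d$, and for $r\ge 2$ it follows from $|\mathcal A_{r-1}|=\lceil d/2^{r-1}\rceil$ — so replacing $d_r$ by $\lceil d/2^{r-1}\rceil$ only enlarges the bound and gives the claim.

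The main obstacle is the conditioning/decoupling step, since Proposition~\ref{prop_concentration_estimator} requires the arm pulls to be chosen without knowledge of the rewards entering the estimator. The elimination structure makes $\mathcal A_{r-1}$, and therefore $V_r$, depend on earlier randomness; but because $\hat\theta_r$ is built from phase-$r$ rewards only, conditioning on $\mathcal A_{r-1}$ freezes $V_r$ while leaving the relevant noise fresh and independent, which is precisely the property the proposition needs. The remaining ingredients — the norm manipulation, the Loewner comparison $V_r\succeq m V(\pi_r)$, and the dimension count — are routine.
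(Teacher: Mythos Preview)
Your proof is correct and follows essentially the same route as the paper: rewrite the event as a one-sided deviation of $\hat\theta_r$ along $a_r(i)-a_r(1)$, apply Proposition~\ref{prop_concentration_estimator}, bound $\|a_r(i)-a_r(1)\|_{V_r^{-1}}^2$ via the triangle inequality and the Loewner comparison $V_r\succeq mV(\pi_r)$, invoke Theorem~\ref{theorem_goptimal} to get $d_r$, and finish with $d_r\le |\mathcal A_{r-1}|=\lceil d/2^{r-1}\rceil$. Your explicit treatment of the conditioning/decoupling step is in fact more careful than the paper's write-up, which simply applies the proposition.
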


\begin{proof}
%Note that Lemma~\ref{lemma1} is conditioned on the specific realization of $\mathcal{A}_{r-1}$.
Let $\theta_r^*$ denote the corresponding unknown parameter vector for the dimensionality-reduced arm vectors $\{a_r(i):i\in \mathcal A_{r-1}\}$. Also we set 
\begin{equation*}
    V_r(\pi_r) = \sum_{i \in \mathcal{A}_{r-1}}  \pi_r(a_r(i)) a_r(i) a_r(i)^{\top}.
\end{equation*} 
Then we have
\begin{align}
    &\phantom{\;=\;}\Pr\left[\hat{p}_{r}(1)<\hat{p}_{r}(i) \right] \notag \\
    &=\Pr\left[ \braket{\hat \theta_r -\theta^*_r,a_r(1)-a_r(i)} < -\Delta_i  \right] \label{lemma1_1} \\
    &\leq\exp \left(-\frac {\Delta_i^2} {2\|a_r(1)-a_r(i)\|_{V_r^{-1}}^2} \right) \label{lemma1_2} \\
    &\leq \exp \left(-\frac {\Delta_i^2} {8 \max_{i\in \mathcal A_r}\|a_r(i)\|_{V_r^{-1}}^2} \right) \label{lemma1_3}\\
    &\leq \exp \left(-\frac {\Delta_i^2\cdot m} {8 \max_{i\in \mathcal A_r}\|a_r(i)\|_{V_r(\pi_r)^{-1}}^2} \right) \label{lemma1_33}\\
    &= \exp \left(-\frac {m\Delta_i^2} {8 d_r} \right) \label{lemma1_4}\\
    &\leq \exp \left(-\frac{m\Delta_{i}^{2}}{8{\left\lceil\frac {d} {2^{r-1}}\right\rceil}}  \right). \label{lemma1_5}
\end{align} 
Line (\ref{lemma1_1}) follows from
 \begin{equation*}
 \begin{cases}
 \hat{p}_{r}(1) = \braket{\hat \theta_r ,a_r(1)} \\
 \hat{p}_{r}(i) = \braket{\hat \theta_r ,a_r(i)} \\
\Delta_i =  \braket{\theta^*_r,a_r(1)-a_r(i)} = \braket{\theta^*,a(1)-a(i)}.
\end{cases}
\end{equation*}
Line (\ref{lemma1_2}) follows from Proposition~\ref{prop_concentration_estimator}, the confidence bound for the OLS estimator.

Line (\ref{lemma1_3}) follows from the triangle inequality for $\|\cdot\|_{V_r^{-1}}$ norm.

Line (\ref{lemma1_33}) follows from
\begin{align*}
    \|a_r(i)\|_{V_r^{-1}}^2 &= a_r(i)^{\top} V_r^{-1}a_r(i) \\
    &= a_r(i)^{\top} \left( \sum_{j \in \mathcal{A}_{r-1}} T_{r}(j) a_r(j) a_r(j)^{\top} \right)^{-1} a_r(i) \\
    &\le a_r(i)^{\top} \left( \sum_{j \in \mathcal{A}_{r-1}} m \pi_r(a_r(j)) a_r(j) a_r(j)^{\top} \right)^{-1} a_r(i) \\
    &= \frac 1 m a_r(i)^{\top} \left( \sum_{j \in \mathcal{A}_{r-1}}  \pi_r(a_r(j)) a_r(j) a_r(j)^{\top} \right)^{-1} a_r(i) \\
    &= \frac 1 m a_r(i)^{\top} V_r(\pi_r)^{-1} a_r(i) \\ 
    &= \frac 1 m {\|a_r(i)\|_{V_r(\pi_r)^{-1}}^2} . 
\end{align*}

Line (\ref{lemma1_4}) follows from Theorem~\ref{theorem_goptimal}, the property of G-optimal design.

Line (\ref{lemma1_5}) follows from the fact that the dimension of the space spanned by the corresponding arm vectors of the active arm set $\mathcal A _{r-1}$ is  not larger than the cardinality of $\mathcal A _{r-1}$. 
\end{proof}

Armed with Lemma~\ref{lemma1}, then we bound the error probability of single phase $r$ in Lemma~\ref{lemma2}.
\begin{lemma}
\label{lemma2}Assume that the best arm is not eliminated prior to phase $r$, i.e., $1\in\mathcal{A}_{r-1}$. Then the probability that the best arm is eliminated in phase $r$ is bounded as 
$$ 
 \Pr[1 \notin \mathcal A _{r}\mid  1 \in \mathcal A_{r-1} ] \leq
\begin{cases}
\frac {4K} {d} \exp \left(-\frac{m \Delta_{i_r}^{2}}{32i_r} \right) &\text{when }r=1\\
3 \exp \left(-\frac{m \Delta_{i_r}^{2}} {32i_r} \right)&\text{when }r>1
\end{cases}
$$
where $i_r = {\left\lceil\frac {d} {2^{r+1}}\right\rceil+1}$.
\end{lemma}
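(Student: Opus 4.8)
The plan is to reduce the elimination event to a purely combinatorial counting statement and then apply Markov's inequality to the subset of currently active arms with large gaps, invoking Lemma~\ref{lemma1} for each individual beat probability. First I would unpack what ``the best arm is eliminated in phase $r$'' means: since Line~22 of Algorithm~\ref{algo1} keeps the $k_r := \lceil d/2^r\rceil$ arms of $\mathcal A_{r-1}$ with the largest estimates, arm~$1$ is eliminated exactly when at least $k_r$ active arms $i$ satisfy $\hat p_r(i) > \hat p_r(1)$. I would condition on the realized active set $\mathcal A_{r-1}$ (a fixed set containing arm~$1$, with $|\mathcal A_{r-1}| = \lceil d/2^{r-1}\rceil$ for $r>1$ and $|\mathcal A_0|=K$); because the phase-$r$ pulls are deterministic given $\mathcal A_{r-1}$ and the OLS estimator uses only phase-$r$ data, Lemma~\ref{lemma1} applies to each active arm, and any bound established for every realization survives the conditioning.

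Next I would split $\mathcal A_{r-1}$ at the index threshold $i_r = \lceil d/2^{r+1}\rceil+1$. Since the gaps are nondecreasing in the index, every active arm $i$ with index $\ge i_r$ has $\Delta_i \ge \Delta_{i_r}$, so by Lemma~\ref{lemma1} it beats arm~$1$ with probability at most $\exp(-m\Delta_{i_r}^2/(8\lceil d/2^{r-1}\rceil))$. Let $s$ denote the number of \emph{active suboptimal} arms with index in $\{2,\dots,i_r-1\}$, so $s \le i_r-2$. If arm~$1$ is eliminated, at most $s$ of the $\ge k_r$ beating arms can be small-gap, hence at least $k_r-s$ large-gap arms beat arm~$1$. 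Writing $N_r^G$ for the number of large-gap active arms that beat arm~$1$, whose total count is $|\mathcal A_{r-1}|-1-s$, Markov's inequality yields
\[
\Pr[\text{arm }1\text{ eliminated}] \le \Pr[N_r^G \ge k_r - s] \le \frac{|\mathcal A_{r-1}|-1-s}{k_r-s}\exp\!\left(-\frac{m\Delta_{i_r}^2}{8\lceil d/2^{r-1}\rceil}\right).
\]

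The main obstacle is controlling the prefactor $(|\mathcal A_{r-1}|-1-s)/(k_r-s)$ \emph{uniformly} over the admissible $s$. A naive estimate (say $s=0$, or counting all active arms as potential beaters) only gives a constant of $4$; reaching the stated $3$ requires exploiting the tension that increasing $s$ simultaneously shrinks both the numerator and the threshold. Since $|\mathcal A_{r-1}|-1 \ge k_r$, the ratio is increasing in $s$, so its worst case is $s=i_r-2$, where a ceiling computation gives $(\lceil d/2^{r-1}\rceil-\lceil d/2^{r+1}\rceil)/(\lceil d/2^{r}\rceil-\lceil d/2^{r+1}\rceil+1)$, which is at most $3$ and tends to $3$ as $d/2^{r+1}\to\infty$; this handles $r>1$. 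For $r=1$ the same step with $|\mathcal A_0|=K$ bounds the prefactor by $K/(d/4)=4K/d$. Finally I would verify the exponent: because $\lceil d/2^{r-1}\rceil \le 4i_r$, we have $8\lceil d/2^{r-1}\rceil \le 32 i_r$, so the exponential above is at most $\exp(-m\Delta_{i_r}^2/(32 i_r))$; combining the prefactor and exponent bounds (all factors positive) produces exactly the two claimed expressions.
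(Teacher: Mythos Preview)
Your proof is correct and follows essentially the same approach as the paper: apply Lemma~\ref{lemma1} to the large-gap active arms and then use Markov's inequality on the count of such arms that beat arm~$1$. The only cosmetic difference is that the paper directly defines a set $\mathcal B_r$ by removing arm~$1$ together with the $\lceil d/2^{r+1}\rceil-1$ smallest-gap active suboptimal arms (so $|\mathcal B_r|=|\mathcal A_{r-1}|-\lceil d/2^{r+1}\rceil$ is fixed), which corresponds exactly to your worst case $s=i_r-2$ and yields the identical prefactor $(|\mathcal A_{r-1}|-\lceil d/2^{r+1}\rceil)/(\lceil d/2^{r}\rceil-\lceil d/2^{r+1}\rceil+1)$ and the same ceiling computations without the intermediate monotonicity-in-$s$ argument.
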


\begin{proof}
First, as Lemma~\ref{lemma1}, we conditioned on the specific realization of $\mathcal{A}_{r-1}$ such that $1\in\mathcal{A}_{r-1}$.

Define $\mathcal B_{r}$ as the set of arms in $\mathcal A_{r-1}$ excluding the best arm and ${\left\lceil\frac {d} {2^{r+1}}\right\rceil}- 1$  suboptimal arms with the largest expected rewards. Therefore, we have $|\mathcal B_r| = |\mathcal A_{r-1}|-{\left\lceil\frac {d} {2^{r+1}}\right\rceil}$ and $\min_{i\in\mathcal B_r} \Delta_{i} \ge \Delta_{\left\lceil\frac {d} {2^{r+1}}\right\rceil+1}$.

If the best arm is eliminated in phase $r,$ then at least ${\left\lceil\frac {d} {2^{r}}\right\rceil}-{\left\lceil\frac {d} {2^{r+1}}\right\rceil}+1$ arms of $\mathcal B_{r}$ have their estimates of the expected rewards larger than that of the best arm.

Let $N_{r}$ denote the number of arms in $\mathcal B_r$ whose estimates of the expected rewards larger than that of the best arm. By Lemma~\ref{lemma1}, we have
\begin{align*}
    \E \left[ N_r  \right] &= \sum_{i\in\mathcal B_r} \Pr\left[\hat{p}_{r}(1)<\hat{p}_{r}(i)\right]  \\
    &\leq  \sum_{i\in\mathcal B_r} \exp \left(-\frac{m \Delta_{i}^{2}}{8{\left\lceil\frac {d} {2^{r-1}}\right\rceil}} \right) \\
    &\leq |\mathcal B_r| \max_{i\in\mathcal B_r}\exp \left(-\frac{m \Delta_{i}^{2}}{8{\left\lceil\frac {d} {2^{r-1}}\right\rceil}} \right) \\
    &\leq \left(|\mathcal A_{r-1}|-{\left\lceil\frac {d} {2^{r+1}}\right\rceil}\right) \exp \left(-\frac{m \Delta_{\left\lceil\frac {d} {2^{r+1}}\right\rceil+1}^{2}}{8{\left\lceil\frac {d} {2^{r-1}}\right\rceil}} \right) \\
    &\leq \left(|\mathcal A_{r-1}|-{\left\lceil\frac {d} {2^{r+1}}\right\rceil}\right) \exp \left(-\frac{m \Delta_{\left\lceil\frac {d} {2^{r+1}}\right\rceil+1}^{2}}{32\left({\left\lceil\frac {d} {2^{r+1}}\right\rceil}+1\right)} \right).
\end{align*}
Then, together with Markov’s inequality, we obtain
\begin{align*}
    \Pr[1 \notin \mathcal A _r  ] &\leq \Pr\left[N_r \ge  {\left\lceil\frac {d} {2^{r}}\right\rceil}-{\left\lceil\frac {d} {2^{r+1}}\right\rceil}+1 \right] \\
    & \leq \frac {\E \left[ N_r  \right]} {{\left\lceil\frac {d} {2^{r}}\right\rceil}-{\left\lceil\frac {d} {2^{r+1}}\right\rceil}+1} \\
    &\leq \frac {|\mathcal A_{r-1}|-{\left\lceil\frac {d} {2^{r+1}}\right\rceil}}{{\left\lceil\frac {d} {2^{r}}\right\rceil}-{\left\lceil\frac {d} {2^{r+1}}\right\rceil}+1} \exp \left(-\frac{m \Delta_{\left\lceil\frac {d} {2^{r+1}}\right\rceil+1}^{2}}{32\left({\left\lceil\frac {d} {2^{r+1}}\right\rceil}+1\right)} \right).
\end{align*}
When $r=1$, we have $|\mathcal A_{r-1}| = K$. Thus,
\begin{align*}
    \frac {|\mathcal A_{r-1}|-{\left\lceil\frac {d} {2^{r+1}}\right\rceil}}{{\left\lceil\frac {d} {2^{r}}\right\rceil}-{\left\lceil\frac {d} {2^{r+1}}\right\rceil}+1}   &=  \frac {K-{\left\lceil\frac {d} {2^{r+1}}\right\rceil}}{{\left\lceil\frac {d} {2^{r}}\right\rceil}-{\left\lceil\frac {d} {2^{r+1}}\right\rceil}+1}  \\
    &\leq  \frac K {{\frac {d} {2}}-{\frac {d} {2^{2}}}} \\
    &= \frac {4K} {d}.
\end{align*} 

When $r>1$, we have $|\mathcal A_{r-1}| = {\left\lceil\frac {d} {2^{r-1}}\right\rceil}$. Thus, 
\begin{align*}
    \frac {|\mathcal A_{r-1}|-{\left\lceil\frac {d} {2^{r+1}}\right\rceil}}{{\left\lceil\frac {d} {2^{r}}\right\rceil}-{\left\lceil\frac {d} {2^{r+1}}\right\rceil}+1}   &=  \frac {{\left\lceil\frac {d} {2^{r-1}}\right\rceil}-{\left\lceil\frac {d} {2^{r+1}}\right\rceil}}{{\left\lceil\frac {d} {2^{r}}\right\rceil}-{\left\lceil\frac {d} {2^{r+1}}\right\rceil}+1} 
    \\&\leq\frac {{\frac {d} {2^{r-1}}}+1-{\left\lceil\frac {d} {2^{r+1}}\right\rceil}}{{\frac {d} {2^{r}}}-{\left\lceil\frac {d} {2^{r+1}}\right\rceil}+1}  \\
    &\leq \frac {3\cdot{\frac {d} {2^{r+1}}}+{\frac {d} {2^{r+1}}}+1-{\left\lceil\frac {d} {2^{r+1}}\right\rceil}}{{\frac {d} {2^{r+1}}}+{\frac {d} {2^{r+1}}}+1-{\left\lceil\frac {d} {2^{r+1}}\right\rceil}} \\
    &\leq  3
\end{align*}
where the last inequality results from the fact that for any $x,y>0$, $\frac {3x+y}{x+y} \le 3$ .

Therefore, for this specific realization of $\mathcal{A}_{r-1}$ satisfying $1\in\mathcal{A}_{r-1}$, 
$$
    \Pr[1 \notin \mathcal A _{r}] \leq
\begin{cases}
\frac {4K} {d} \exp \left(-\frac{m \Delta_{i_r}^{2}}{32i_r} \right) &\text{when }r=1\\
3 \exp \left(-\frac{m \Delta_{i_r}^{2}} {32i_r} \right)&\text{when }r>1
\end{cases}
$$
where $i_r = {\left\lceil\frac {d} {2^{r+1}}\right\rceil+1}$.

Eventually, by the law of total probability, the error probability of phase $r$ conditioned on $1\in\mathcal{A}_{r-1}$ can be bounded as
$$
    \Pr[1 \notin \mathcal A _{r}\mid  1 \in \mathcal A_{r-1} ] \leq
\begin{cases}
\frac {4K} {d} \exp \left(-\frac{m \Delta_{i_r}^{2}}{32i_r} \right) &\text{when }r=1\\
3 \exp \left(-\frac{m \Delta_{i_r}^{2}} {32i_r} \right)&\text{when }r>1.
\end{cases}
$$
\end{proof}

Now we return to the proof of Theorem~\ref{theorem_upperbound}.
\begin{proof}[Proof of Theorem~\ref{theorem_upperbound}]
By applying Lemma~\ref{lemma_m} and Lemma~\ref{lemma2}, we have
\begin{align*}
\Pr\left[i_{\mathrm{out}} \neq 1 \right] &= \Pr\left[1\notin \mathcal A_{\lceil\log _{2} d \rceil} \right] \\
&\le \sum_{r=1}^{\lceil\log _{2} d \rceil}  \Pr[1 \notin \mathcal A _{r}\mid 1 \in \mathcal A_{r-1} ] \\
&\le \frac {4K} {d} \exp \left(-\frac{m \Delta_{i_1}^{2}}{32i_1} \right) + \sum_{r=2}^{\lceil\log _{2} d \rceil} 3 \exp \left(-\frac{m \Delta_{i_r}^{2}} {32i_r} \right) \\
&\le \left( \frac {4K} { d}+3\left( {\lceil\log _{2} d \rceil}-1 \right)\right) \exp \left( - \frac {m} {32} \cdot \frac 1 {\max_{2\leq i \leq d} \frac {i} {\Delta_i^2} } \right) \\
&<
\left( \frac {4K} { d}+3\log _{2} d\right) \exp \left( - \frac {m} {32H_{2,\mathrm{lin}}} \right)
\end{align*}
where  $H_{2,\mathrm{lin}}$ is defined as
$$
H_{2,\mathrm{lin}} = \max_{2\leq i \leq d} \frac {i} {\Delta_i^2}.
$$
\end{proof}

\section{On the detailed comparisons to (fixed-budget) Peace \texorpdfstring{\citep{katz2020empirical}}{[16]}}
\label{appendix_peace}

In this Appendix, we show the detailed derivation of our comparisons to the fixed-budget version of Peace \citep{katz2020empirical}. 

In the fixed-budget setting, Theorem 6 in \citep{katz2020empirical} shows the error probability of Peace is upper bounded by  $$2 \lceil \log (\gamma(\mathcal{Z}))\rceil\exp\left(-\frac T {c(\rho^*+\gamma^*)\log (\gamma(\mathcal Z))}\right)$$
with a constant $c$. $\rho^*$, $\gamma^*$ and $\gamma(\mathcal Z)$ are defined therein and replicated below with the notations of this paper for the sake of clarity and completeness. For comparison to our bound in Theorem~\ref{theorem_upperbound}, we only focus on the exponential term with respect to time budget $T$ (i.e., we ignore the pre-exponential term $2 \lceil \log (\gamma(\mathcal{Z}))\rceil$).  We assume that $T$ is large so the exponential term dominates the exponential decay rate  of the bound on the error probability. 

%As noted in \citet{katz2020empirical}, $\log (\gamma(\mathcal{Z}))=O(\log d)$ in linear bandits.

Then we consider the special case of standard multi-armed bandits with $d=K$ (as discussed in Remark~\ref{rmk:standard}) and  all optimality gaps equal to the smallest one $\Delta_1$. For the term $\rho^* $, we have
$$
\rho^* = \min_{\pi } \rho^*(\pi)
$$
where 
$$
\begin{aligned}
\rho^*(\pi) &= \max_{i \in \mathcal{A}\setminus \{1\} } \frac{ \| a(1)-a(i)\|^2_{V(\pi)^{-1}} }{\Delta_1^{2}}\\&=\frac 1 {\Delta_1^{2}}\max_{i \in \mathcal{A}\setminus \{1\} }\left ( \frac 1{ \pi(a(1))}+\frac 1{ \pi(a(i))}\right)
\\&=\frac 1 {\Delta_1^{2}}\left ( \frac 1{ \pi(a(1))}+\max_{i \in \mathcal{A}\setminus \{1\} }\frac 1{ \pi(a(i))}\right).
\end{aligned}
$$

Thus it is straightforward to see that
$$
\begin{aligned}
\rho^* &= \min_{\pi} \frac 1 {\Delta_1^{2}}\left ( \frac 1{ \pi(a(1))}+\max_{i \in \mathcal{A}\setminus \{1\} }\frac 1{ \pi(a(i))}\right) \\
&= \Delta_1^{-2}(d + 2\sqrt{d-1}) \\
&= \Theta(\Delta_1^{-2}\cdot d),
\end{aligned}
$$
where the minimum is attained at the distribution $\pi^* =\frac{1}{1+\sqrt{d-1}}\left(1, \frac{1}{\sqrt{d-1}}, \ldots,  \frac{1}{\sqrt{d-1}}\right)$.

For the term $\gamma^*$, we have 
\begin{equation}
\gamma^* = \min_{\pi} \gamma^*(\pi) \label{eqn:gamma_star}
\end{equation}
where
$$
\begin{aligned}
\gamma^*(\pi) &= \left(\mathbb{E}_{\eta\sim \mathcal N (0,I)} \left[ \max_{i \in \mathcal{A}\setminus \{1\} } \frac{ ( a(1)-a(i))^\top {V(\pi)^{-1/2}}\eta }{\Delta_1} \right]\right)^2 \\
&= \frac 1 {\Delta_1^{2}} \left(\mathbb{E}_{\eta\sim \mathcal N (0,I)} \left[ \max_{i \in \mathcal{A}\setminus \{1\} } \frac {\eta_1}{\sqrt{\pi(a(1))}} - \frac {\eta_i}{\sqrt{\pi(a(i))}} \right] \right)^2 \\
&= \frac 1 {\Delta_1^{2}}  \left(\mathbb{E}_{\eta\sim \mathcal N (0,I)} \left[ \max_{i \in \mathcal{A}\setminus \{1\} }\frac {\eta_i}{\sqrt{\pi(a(i))}} \right]\right)^2.
\end{aligned}
$$

Recall that $\eta_1,\eta_2, \ldots, \eta_d$ are independent and identically distributed.  By symmetry, the solution $\pi^*$ to the optimization problem~\eqref{eqn:gamma_star}  must be symmetric/equal  for ${i \in \mathcal{A}\setminus \{1\} }$, i.e.,   $\pi^*(a(2))=\cdots =  \pi^*(a(d))$. In addition, it is clear that $\pi^*(1) = 0$ (to minimize the expectation). Hence, $\pi^*(a(2))=\cdots =  \pi^*(a(d))=1/(d-1)$. Using the fact that the expectation of the maximum of $d-1$ independent standard Gaussian random variables is $\Theta(\sqrt{\log (d-1)})$, we conclude that $\gamma^*=\Theta(\Delta_1^{-2}\cdot d\log d)$.

Finally, for the term $\gamma(\mathcal{Z})$, since 
$$
 \max_{i,j \in \mathcal{A} } { ( a(i)-a(j))^\top {V(\pi)^{-1/2}}\eta } \ge\max_{i \in \mathcal{A}\setminus \{1\}} { ( a(1)-a(i))^\top {V(\pi)^{-1/2}}\eta }
$$
and both 
$$
\mathbb{E}_{\eta\sim \mathcal N (0,I)} \left[ \max_{i,j \in \mathcal{A} } { ( a(i)-a(j))^\top {V(\pi)^{-1/2}}\eta } \right]
$$
and
$$
\mathbb{E}_{\eta\sim \mathcal N (0,I)} \left[ \max_{i \in \mathcal{A}\setminus \{1\}} { ( a(1)-a(i))^\top {V(\pi)^{-1/2}}\eta } \right]
$$
are nonnegative, we have 
$$
\begin{aligned}
\gamma(\mathcal{Z})&= \min_{\pi} \left(\mathbb{E}_{\eta\sim \mathcal N (0,I)} \left[ \max_{i,j \in \mathcal{A} } { ( a(i)-a(j))^\top {V(\pi)^{-1/2}}\eta } \right]\right)^2  \\
&\ge \min_{\pi} \left(\mathbb{E}_{\eta\sim \mathcal N (0,I)} \left[ \max_{i \in \mathcal{A}\setminus \{1\}} { ( a(1)-a(i))^\top {V(\pi)^{-1/2}}\eta } \right]\right)^2  \\
&= {\Delta_1^{2}} \gamma^*
\end{aligned}
$$
which shows $\log (\gamma(\mathcal{Z}))=\Omega(\log d)$.
%In our setting $\mathcal{Z} = \{ a,a^*\}\ge 1$ (see line after Theorem 6 in \citet{katz2020empirical}). Thus, 
%Therefore, 
As noted in \citet{katz2020empirical} (see lines after Theorem 6 therein), $\log (\gamma(\mathcal{Z}))=O(\log d)$ in linear bandits. Therefore, it holds that $\log (\gamma(\mathcal{Z}))=\Theta(\log d)$.

Altogether, in this special case of standard ($K$-armed) multi-armed bandits, the upper bound on the error probability of Peace \citep{katz2020empirical} writes $$\text{Peace}\le \exp \left(-\Omega \left( \frac{T\Delta_1^2}{d \; \red{\log^2d} } \right)\right)$$ (focusing  only on the exponential term) while our upper bound on the error probability (in   Theorem~\ref{theorem_upperbound}) reduces to $$\text{OD-LinBAI}\le\exp \left(-\Omega \left( \frac{T\Delta_1^2}{d\;  \blue{\log d} }\right)\right),$$ (again focusing  only on the exponential term) which clearly shows Peace is not minimax optimal in the fixed-budget setting. Peace is off by a multiplicative factor of $\log d$ in the denominator in the exponent. We note that   the term $\gamma^*$  in \eqref{eqn:gamma_star} (and not $\rho^*$) causes the overall bound of Peace to be suboptimal in the minimax sense.

\section{Proof of Theorem~\ref{theorem_lowerbound}}
\label{appendix_lowerbound}
The proof of Theorem~\ref{theorem_lowerbound} is built on the connection between linear bandits and standard multi-armed bandits \citep{carpentier2016tight}. Therefore, we first introduce the setting of best arm identification in standard multi-armed bandits.

In a standard multi-armed bandit instance $\tilde \nu$, the agent is given an arm set $\mathcal A = [K]$.  Each arm $i \in \mathcal A$ is associated with a reward distribution $P_i$ supported in $[0, 1]$, which is unknown to the agent. At each time $t$, the agent chooses an arm $A_t$ from the arm set $\mathcal A$ and then observes a stochastic reward $X_t \in [0, 1]$ drawn from $P_{A_t}$. 

In the fixed-budget setting, given a time budget $T \in \mathbb N$, the agent also aims at maximizing the probability of identifying the best arm with no more than $T$ arm pulls. More formally, the agent uses an \emph{online} algorithm $\tilde \Pi$ to decide the arm $A_t^{\tilde \Pi}$ to pull at each time step $t$, and the arm $i_{\mathrm{out}}^{\tilde \Pi} \in \mathcal A$  to output as the identified best arm by time $T$. 

\newcommand{\dif}{\mathop{}\!\mathrm{d}}
As in linear bandits, we assume that the expected rewards of the arms are in descending order and the best arm is unique. Let $\tilde {\mathcal E}$ denote the set of all the standard multi-armed bandit instances defined above. For any arm $i\in \mathcal A$, let $p(i)$ denote the expected reward under $P_i$. Similarly, for any suboptimal arm $i$, we denote $\Delta_i = p(1) - p(i)$ as the optimality gap. For ease of notation, we also set $\Delta_1= \Delta_2$.

Moreover, the two hardness quantities  $H_1$ and $H_2$ are also applicable to standard multi-armed bandits. For any standard multi-armed bandit instance $\tilde \nu \in \tilde {\mathcal E}$, we denote the hardness quantity $H_{1}$ of $\tilde \nu$ as $H_{1}(\tilde \nu)$. In addition, let $\tilde {\mathcal E}(h)$ denote the set of standard multi-armed bandit instances in $\tilde  {\mathcal E}$ whose $H_{1}$ is bounded by $h$ ($h>0$), i.e., $ \tilde {\mathcal E}(h) = \{\tilde {\nu} \in \tilde {\mathcal E}:  H_{1}(\tilde {\nu}) \le h\}$.

A minimax lower bound for the problem of best arm identification in standard multi-armed bandits in the fixed-budget setting is provided in Theorem~\ref{theorem_lowerbound_K}.

\begin{theorem}[{Adapted from \citep[Theorem 1]{carpentier2016tight}}]
\label{theorem_lowerbound_K}
If $T \ge h^{2} \log (6 T K) /900$, then 
\begin{equation*}
    \min_{\tilde \Pi} \max_{\tilde \nu \in \tilde {\mathcal E} (h)} \Pr\left[i_{\mathrm{out}}^{\tilde \Pi} \neq 1 \right] \ge \frac 1 6 \exp \left( - \frac {240T} h \right).
\end{equation*}
Further if $h \ge 15 K^2$, then 
\begin{equation*}
    \min_{\tilde \Pi} \max_{\tilde  \nu \in \tilde {\mathcal E}(h)} \left( \Pr\left[i_{\mathrm{out}}^{\tilde \Pi} \neq 1 \right] \cdot \exp \left(  \frac {2700T} {H_{1}(\tilde  \nu) \log_2K } \right) \right)\ge \frac 1 6.
\end{equation*}
\end{theorem}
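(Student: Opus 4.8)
The plan is to establish Theorem~\ref{theorem_lowerbound_K} by adapting the minimax construction and the information-theoretic (change-of-measure) argument of \citet{carpentier2016tight}, tracking the explicit constants needed for the two displayed bounds. The overall scheme is standard for fixed-budget lower bounds: build a finite family of bandit instances supported in $[0,1]$ that any algorithm is forced to confuse, then convert indistinguishability into a lower bound on the worst-case error via a Bretagnolle--Huber-type inequality. Throughout I would use near-$\frac12$ Bernoulli (or truncated Gaussian) arms, so that the per-sample Kullback--Leibler divergence between an arm at mean $\mu$ and a perturbed mean $\mu+\Delta$ is controlled by $c\Delta^2$ for an absolute constant $c$; this is what lets the gaps $\Delta_i$ enter the exponent quadratically and produces the $H_1$-type complexity.

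For the first bound I would take a single near-uniform-gap family. Fix a common gap $\Delta$ with $\Delta^2 \asymp K/a$ so that every instance in the family has $H_{1} = \sum_{i}\Delta_i^{-2} \le a$, and for each $j\in[K]$ let $\tilde\nu^{(j)}$ be the instance in which arm $j$ is the unique best arm (mean raised by $\Delta$) and all others are tied. The change-of-measure step compares the base instance $\tilde\nu^{(1)}$ with $\tilde\nu^{(j)}$ and yields, for every $j$,
\begin{equation*}
\Pr_{\tilde\nu^{(1)}}[\,i_{\mathrm{out}}=j\,]+\Pr_{\tilde\nu^{(j)}}[\,i_{\mathrm{out}}\neq j\,]\ \ge\ \tfrac12\exp\!\left(-c\,\E_{\tilde\nu^{(1)}}[N_j]\,\Delta^2\right),
\end{equation*}
where $N_j$ is the number of pulls of arm $j$. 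Since $\sum_{j}\E_{\tilde\nu^{(1)}}[N_j]=T$, a pigeonhole argument produces an arm $j^\star$ with $\E_{\tilde\nu^{(1)}}[N_{j^\star}]\le T/K$; plugging $\Delta^2\asymp K/a$ into the displayed bound for $j^\star$ gives an exponent of order $T/a$, and bookkeeping the absolute constants (together with the side condition $T\ge a^2\log(6TK)/900$, which keeps the perturbed means valid and absorbs the logarithmic factor from the union bound over the horizon) yields the claimed $\tfrac16\exp(-240T/a)$.

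For the second bound, which exhibits the unavoidable $\log_2 K$ price of not knowing the complexity, I would use the multi-scale construction of \citet{carpentier2016tight}. For each scale $m=1,\dots,\lfloor\log_2 K\rfloor$ build a sub-family in which roughly $2^{m}$ arms are near-optimal with common gap $\Delta_m$ chosen so that $2^m\Delta_m^{-2}$ is the same across scales; this makes every instance have comparable $H_1$ while the optimal sampling allocation differs drastically from scale to scale. Running the same change-of-measure bound within each scale shows that an algorithm devoting expected budget $T_m$ to resolving scale $m$ incurs error $\gtrsim\exp(-cT_m/H_1)$ on that scale; since $\sum_m T_m\le T$ over the $\lfloor\log_2 K\rfloor$ scales, there is a scale with $T_m\le T/\log_2 K$, forcing error at least $\exp(-cT/(H_1\log_2 K))$. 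The hypothesis $a\ge 15K^2$ is exactly what is needed so that all scales can be realized simultaneously inside the class $\tilde{\mathcal E}(a)$ with valid $[0,1]$-distributions, and rearranging gives the stated product form with constant $\tfrac16$ and exponent $2700T/(H_1\log_2 K)$.

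The main obstacle is the second bound: establishing the $\log_2 K$ factor requires the careful multi-scale design and the simultaneous-realizability conditions, and then propagating the explicit constants ($240$, $2700$, and $\tfrac16$) through the change-of-measure inequality, the KL bound $c\Delta^2$, the pigeonhole over scales, and the two side conditions. The first bound is comparatively routine once the change-of-measure lemma is in place; the delicate accounting is in \emph{matching} the constants of \citet{carpentier2016tight} rather than merely producing order-optimal ones, and in verifying that the reformulation in terms of $H_1$ and the constrained class $\tilde{\mathcal E}(a)$ preserves their guarantees.
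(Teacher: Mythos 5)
You should first be aware that the paper does not actually prove Theorem~\ref{theorem_lowerbound_K} at all: its entire treatment is to import Theorem~1 of \citet{carpentier2016tight} and remark that the constants differ from the source's only because this paper defines $H_1=\sum_{1\le i\le K}\Delta_i^{-2}$ with the convention $\Delta_1:=\Delta_2$, rather than $\sum_{2\le i\le K}\Delta_i^{-2}$. Since $\Delta_1^{-2}=\Delta_2^{-2}\le\sum_{2\le i\le K}\Delta_i^{-2}$, this definitional change inflates $H_1$ by at most a factor of $2$, and propagating that factor (together with the replacement of the source's $\overline{\log}(K)=\sum_{i=2}^K 1/i$ by $\log_2 K$) through the cited statement is the only new content in the paper's ``proof.'' Your plan instead re-derives the Carpentier--Locatelli result from scratch, which is a legitimate and genuinely more self-contained route; but as written it is a sketch whose specific constants ($240$, $2700$, the side conditions $T\ge a^2\log(6TK)/900$ and $a\ge 15K^2$) are asserted to ``come out of bookkeeping'' rather than derived. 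That matters here because these exact constants are reused verbatim in the proof of Theorem~\ref{theorem_lowerbound}, and because you flag but never carry out the one step the paper actually performs: identifying the $H_1$ definitional discrepancy and the resulting at-most-factor-$2$ adjustment that produces the displayed constants.

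On the substance of the reconstruction, the first-bound argument (uniform gaps $\Delta^2\asymp K/a$, Bretagnolle--Huber change of measure, pigeonhole over arms) is sound in outline. The second bound has a genuine soft spot: your claim that choosing $2^m\Delta_m^{-2}$ constant across scales ``makes every instance have comparable $H_1$'' does not survive scrutiny. For the budget-splitting pigeonhole $\sum_m T_m\le T$ to have any force, the per-scale sub-families must share a common base instance (expected pull counts must be taken under one reference measure); but then flipping a near-optimal arm at scale $m$ places all finer-scale arms at gap roughly $\Delta_m$, so the flipped instance has $H_1(\nu)\approx(\log_2 K-m)\cdot 2^m\Delta_m^{-2}$, varying across the family by up to a $\log_2 K$ factor. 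If the pigeonhole lands on a small $m$, the instance you exhibit has $H_1(\nu)$ much larger than the nominal common value, and the claimed product inequality $\Pr[\mathrm{err}]\cdot\exp\left(2700T/(H_1(\nu)\log_2 K)\right)\ge\tfrac{1}{6}$ does not follow from the error bound you derived. This is precisely why the theorem is stated in product form with an instance-dependent $H_1(\nu)$, and why \citet{carpentier2016tight} use a graded family (gaps $\Delta_i^2\propto i$) with a weighted pigeonhole that matches the selected arm's pull count to its instance's complexity, rather than a clean equal-complexity dyadic construction. So the $\log_2 K$ part of your plan needs this repair before it is a proof --- whereas the paper sidesteps all of it by inheritance from the cited result.
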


The differences between the constants of Theorem~\ref{theorem_lowerbound_K} and those of \citet{carpentier2016tight} come from the slight difference in the definitions of $H_1$. In particular, as in \citep{lattimore2020bandit, audibert2010best}, we define $H_1$ as $\sum _{1\le i\le K} \Delta_i^{-2}$ instead of $\sum _{2\le i\le K} \Delta_i^{-2}$.

Now we return to the proof of Theorem~\ref{theorem_lowerbound}.
\begin{proof}[Proof of Theorem~\ref{theorem_lowerbound}]
The idea of the proof is to reduce the linear bandit problem to the standard multi-armed bandit problem. 

We construct a special linear bandit instance $\nu$ as follows. Recall that we assume the entire set of original arm vectors $ \{a(1), a(2),\dots,a(K)\} $ span $\mathbb R ^d$, so it holds that $K\ge d$. For any arm $i\in\{1,2,\dots,d\}$, the corresponding arm vector is chosen to be $e_i$,  the $i^{\mathrm{th}}$ standard basis of $\mathbb{R}^d$. It follows that $\theta ^ * = [p(1),p(2),\cdots,p(d)]^{\top} \in \mathbb{R}^d$. For all the remaining arms $i \in \{d+1,d+2,\dots,K\}$, the corresponding arm vector $a(i)$ is chosen to be zero vector, i.e, a vector with all entries equal to $0$. Furthermore, we require the expected rewards of all the arms to be nonnegative. That is to say, $p(i) \ge 0$ for all $i \in [K]$ and in particular $p(i) = 0$ for all $i \in \{d+1,d+2,\dots,K\}$.

If the agent is given the above extra information that the expected rewards of all the arms are nonnegative (which can only help the agent improve the identification probability), then the agent knows immediately that the best arm must be among the arms $\{1,2,\dots,d\}$ since $p(1), p(2),\dots,p(K) \ge 0$. In addition, pulling the remaining arms cannot provide any useful information since the corresponding arm vectors are vectors of all zeros. Thus, the best strategy that the agent can follow is to only pull the first $d$ arms. Consequently, this linear bandit instance $\nu$ is reduced to a standard bandit instance $\tilde \nu$ with $d$ independent arms.

Therefore, Theorem~\ref{theorem_lowerbound_K} gives a minimax lower bound on the probability of misidentifying the best arm in the standard bandit instance $\tilde \nu$, due to the fact that any bounded random variable on $[0,1]$ is $1$-subgaussian. Also, following the above construction, it holds that
\begin{equation*}
    H_{1,\mathrm{lin}}(\nu) = H_{1}(\tilde \nu).
\end{equation*}
Notice that the agent cannot do better in the absence of the extra information in the linear bandit instance $\nu$. The minimax lower bound derived from Theorem~\ref{theorem_lowerbound_K} is also a minimax lower bound for the problem of best arm identification in linear bandits in the fixed-budget setting.
\end{proof}

\section{Additional implementation details and numerical results}
\label{appendix_exp}
\subsection{Additional implementation details}
\label{appendix_exp_details}
\paragraph{OD-LinBAI.} In each phase, we compute an $\epsilon$-approximate G-optimal design, where $\epsilon = 10^{-7}$. As noted in Appendix~\ref{appendix_goptimal}, this causes minimal impact on performance. Moreover, we follow the Wolfe--Atwood Algorithm with the Kumar--Yildirim start introduced in \citet{todd2016minimum}.

\paragraph{Sequential Halving.} In any linear bandit instance, we treat the $K$ arms as being independent and then apply Sequential Halving \citep{karnin2013almost}.

\paragraph{BayesGap.} For unknown parameter vector $\theta^*$, we use an uninformative prior with $\eta = 10^6$, a very large variance, for a fair comparison. In fact, through extensive tests, we notice that this parameter has limited influence on the performance. With respect to the parameter $\epsilon$ that controls the tolerance of output, although it suffices to set $\epsilon$ to be the minimum optimality gap $ \Delta_1$ theoretically, we follow the setting of \citet{hoffman2014correlation}, i.e., $\epsilon = 0$.
\begin{itemize}
    \item BayesGap-Oracle: We directly give the algorithm exact information of the required hardness quantity $H_1$.
    
    \item BayesGap-Adaptive: Following \citet{hoffman2014correlation}, we estimate the required hardness quantity by the three-sigma rule at the beginning of each time step.
\end{itemize}

\paragraph{Peace and LinearExploration.} We give an advantage to these two methods by ignoring the rounding issue and allowing fractional arm pulls, which leads to better performance. For the computation of the $\mathcal{XY}$-allocation, we follow the Frank--Wolfe heuristic algorithm in \citet{fiez2019sequential}.

\paragraph{GSE.} For the computation of the G-optimal design, we use the same method as OD-LinBAI.

\subsection{Synthetic dataset 2: random arm vectors}
\label{appendix_exp2}
In this experiment, the $K$ arm vectors are uniformly sampled from the unit $d$-dimensional sphere $\mathbb S^{d-1}$. Without loss of generality, we assume that $a(1), a(2)$ are the two closest arm vectors and then set $\theta^* = a(1) + 0.01(a(1)-a(2))$. Thus the best arm is arm $1$ while the second best arm is arm $2$. We also assume that the additive random noise follows the standard Gaussian distribution $\mathcal{N}(0, 1)$. Different from previous works \citep{tao2018best, zaki2019towards, fiez2019sequential,degenne2020gamification}, we set the number of arms to be $K = c^d$ for different integers~$c$. According to  \citet[Theorem~$8$]{cai2013distributions}, the minimum optimality gap $\Delta_1$   converges in probability to a positive number as $d$ tends to infinity so that the random linear bandit instances 
which we perform our experiments on are neither too hard nor too easy. 

\begin{figure}[htbp]
    \vskip 0.2in
	\centering
    \begin{minipage}[t]{0.75\linewidth}
		\centering
		\includegraphics[width=1\textwidth]{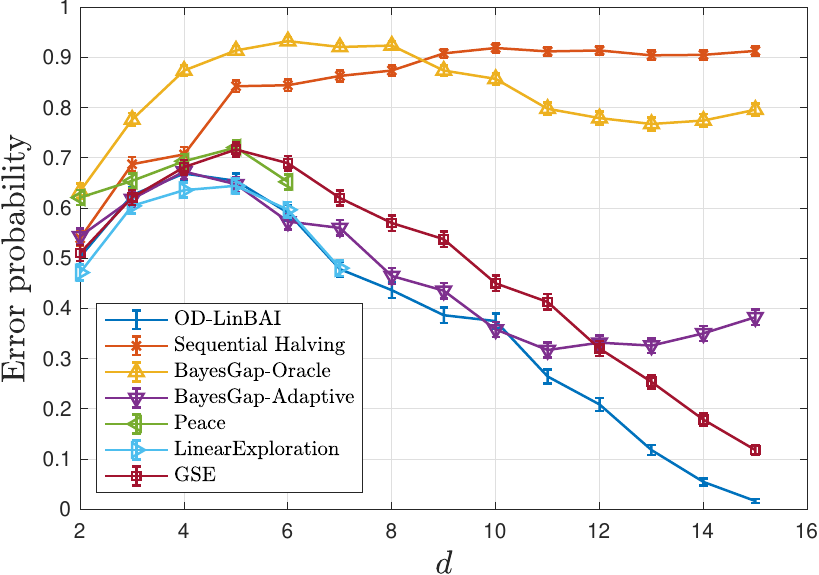}
	\end{minipage}
	\begin{minipage}[t]{0.75\linewidth}
		\centering
		\includegraphics[width=1\textwidth]{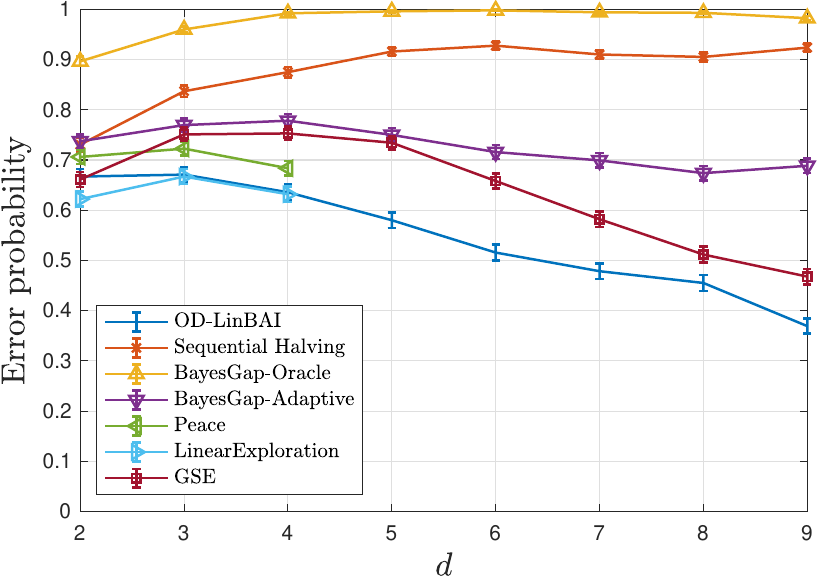}
	\end{minipage}
	\caption{Error probabilities for different $d$ with  $c=2$ at the top and $c=3$ at the bottom.}
	\label{fig_dataset2}
\end{figure}

Figure~\ref{fig_dataset2} shows the error probabilities of the $7$ different algorithms for this dataset with $c=2$ or $3$ when the time budget $T=2K$. In most situations, OD-LinBAI outperforms the other algorithms. 

It is shown in Figure~\ref{fig_dataset2} that BayesGap-Oracle does not outperform its adaptive version BayesGap-Adaptive and sometimes it even performs worse than Sequential Halving. This is partly because BayesGap-Oracle might be too conservative to converge when $T=2K$. It is noted that in UGapEb \citep{gabillon2012best}, from which BayesGap is adapted, the exploration parameter that controls how much exploration the algorithm does is tuned even if the required hardness quantity is known to the agent. Nevertheless, our algorithm OD-LinBAI is fully parameter-free.

\begin{table}[t]
  \caption{The empirical means of $\Delta_1$, $H_1$, $H_2$, $H_{1,\mathrm{lin}}$ and $H_{2,\mathrm{lin}}$ for different $d$ with $c=2$.}
  \label{table_dataset2_1}
  \vskip 0.15in
  \centering
  \begin{tabular}{llllll}
    \toprule
    $d$ &   $\Delta_1$ & $H_1$ & $H_2$ &  $H_{1,\mathrm{lin}}$ & $H_{2,\mathrm{lin}}$ \\
    \midrule
    2  & 1.040E-01 & 2.054E+09 & 2.054E+09 & 2.054E+09 & 2.054E+09 \\
    3  & 6.395E-02 & 2.498E+05 & 2.496E+05 & 2.498E+05 & 2.496E+05 \\
    4  & 4.751E-02 & 4.676E+04 & 4.666E+04 & 4.674E+04 & 4.666E+04 \\
    5  & 4.632E-02 & 5.305E+03 & 5.180E+03 & 5.264E+03 & 5.180E+03 \\
    6  & 4.360E-02 & 3.152E+03 & 2.968E+03 & 3.066E+03 & 2.968E+03 \\
    7  & 4.107E-02 & 2.427E+03 & 2.134E+03 & 2.246E+03 & 2.134E+03 \\
    8  & 3.998E-02 & 2.994E+03 & 2.490E+03 & 2.623E+03 & 2.490E+03 \\
    9  & 3.888E-02 & 2.730E+03 & 1.841E+03 & 1.988E+03 & 1.841E+03 \\
    10 & 3.848E-02 & 3.390E+03 & 1.757E+03 & 1.926E+03 & 1.757E+03 \\
    11 & 3.763E-02 & 4.809E+03 & 1.787E+03 & 1.932E+03 & 1.741E+03 \\
    12 & 3.696E-02 & 7.660E+03 & 2.453E+03 & 2.022E+03 & 1.804E+03 \\
    13 & 3.687E-02 & 1.295E+04 & 4.354E+03 & 1.924E+03 & 1.682E+03 \\
    14 & 3.657E-02 & 2.358E+04 & 8.747E+03 & 1.982E+03 & 1.708E+03 \\
    15 & 3.622E-02 & 4.434E+04 & 1.764E+04 & 2.006E+03 & 1.703E+03 \\
    \bottomrule
  \end{tabular}
\end{table}

\begin{table}[t]
  \caption{The empirical means of $\Delta_1$, $H_1$, $H_2$, $H_{1,\mathrm{lin}}$ and $H_{2,\mathrm{lin}}$ for different $d$ with $c=3$.}
  \label{table_dataset2_2}
  \vskip 0.15in
  \centering
  \begin{tabular}{llllll}
    \toprule
    $d$ &   $\Delta_1$ & $H_1$ & $H_2$ &  $H_{1,\mathrm{lin}}$ & $H_{2,\mathrm{lin}}$ \\
    \midrule
    2 & 4.896E-03 & 6.360E+14 & 6.360E+14 & 6.360E+14 & 6.360E+14 \\
    3 & 5.584E-03 & 3.767E+08 & 3.767E+08 & 3.767E+08 & 3.767E+08 \\
    4 & 5.655E-03 & 2.814E+06 & 2.812E+06 & 2.813E+06 & 2.812E+06 \\
    5 & 5.898E-03 & 3.030E+05 & 3.012E+05 & 3.023E+05 & 3.012E+05 \\
    6 & 6.070E-03 & 1.326E+05 & 1.299E+05 & 1.309E+05 & 1.299E+05 \\
    7 & 6.224E-03 & 1.128E+05 & 1.074E+05 & 1.083E+05 & 1.074E+05 \\
    8 & 6.177E-03 & 9.752E+04 & 8.441E+04 & 8.534E+04 & 8.441E+04 \\
    9 & 6.182E-03 & 1.096E+05 & 7.520E+04 & 7.621E+04 & 7.520E+04 \\
    \bottomrule
  \end{tabular}
\end{table}

The empirical means of $\Delta_1$, $H_1$, $H_2$, $H_{1,\mathrm{lin}}$ and $H_{2,\mathrm{lin}}$ for different $d$ with $c=2$ or $3$ are reported in Table~\ref{table_dataset2_1} and Table~\ref{table_dataset2_2} respectively while the empirical means of the CPU runtimes\footnote{All our experiments are implemented
in MATLAB and parallelized on an Intel(R) Core(TM) i7-4790 CPU @ 3.60GHz.} for different algorithms are listed in 
Table~\ref{table_dataset2_3} and Table~\ref{table_dataset2_4}. {The empty cells denote algorithms and instances whose complexities are too large such that their runtimes are impractical.} From these tables, we have the following observations:
\begin{enumerate}[label = (\roman*)]
    \item With the increase in the dimension of the linear bandit instances, the empirical means of the minimum optimality gap $\Delta_1$ vary a little. However, for OD-LinBAI, the linear bandit instances become easier since the time budgets grow exponentially.
    
    \item Different from synthetic dataset 1, the values of the four hardness quantities $H_1$, $H_2$, $H_{1,\mathrm{lin}}$ and $H_{2,\mathrm{lin}}$ in synthetic dataset 2 are close. This is because they are dominated by several smallest optimality gaps.
    \item OD-LinBAI shows great superiority in terms of CPU runtimes with the increase in $d$, and hence is computationally efficient compared to other methods. In particular, BayesGap is computationally intractable for synthetic dataset 2 with large $d$, due to the time-consuming matrix inverse updates at each time step. For large $d$, Peace and LinearExploration are also intractable with a reasonable computing resource, due to the time-consuming computation of the $\mathcal{XY}$-allocation via the Frank--Wolfe algorithm  heuristic algorithm in \citet{fiez2019sequential} (see Appendix \ref{appendix_exp_details}). However, the $\mathcal{XY}$-allocation does result in slightly better empirical performance as shown in the error probabilities of LinearExploration for small $d$. 
\end{enumerate}

\begin{table}
  \caption{The empirical means of the CPU runtimes for different algorithms for different $d$ with $c=2$.}
  \label{table_dataset2_3}
  \vskip 0.15in
  \centering
  \begin{tabular}{rrrrrrrrr} %rrrrrrrrr
    \toprule
    &  \multicolumn{7}{c}{CPU runtimes (secs)}                   \\
    \cmidrule(r){2-8}
    $d$ & \small OD-LinBAI & \scriptsize
Sequential Halving & \scriptsize
BayesGap-Oracle & \scriptsize
BayesGap-Adaptive &\small
Peace & \scriptsize
LinearExploration & \small
GSE\\
    \midrule
    2  & 0.003 & $<$0.001 & $<$0.001  & $<$0.001 &  0.005 &  0.003 &    0.004\\
    3  & 0.006 & $<$0.001 & 0.001   & 0.001  &0.052 &0.015 &0.006\\
    4  & 0.007 & $<$0.001 & 0.001   & 0.001  &0.664 &0.129 &0.022\\
    5  & 0.015 & $<$0.001 & 0.002   & 0.003  &9.199 &1.006 &0.083\\
    6  & 0.015 & $<$0.001 & 0.006   & 0.007  &145.707 &8.527 &0.153\\
    7  & 0.009 & $<$0.001 & 0.017   & 0.022  &- &70.72 &0.222\\
    8  & 0.008 & $<$0.001 & 0.059   & 0.074  &- &- &0.397\\
    9  & 0.011 & $<$0.001 & 0.206   & 0.245  &- &- &0.378\\
    10 & 0.015 & $<$0.001 & 0.801   & 0.916  &- &- &0.504\\
    11 & 0.028 & 0.001  & 3.168   & 3.729   &- &- &0.542\\
    12 & 0.067 & 0.001  & 12.992  & 14.163  &- &- &1.119\\
    13 & 0.102 & 0.002  & 49.538  & 54.417  &- &- &2.428\\
    14 & 0.222 & 0.004  & 197.938 & 216.541 &- &- &6.538\\
    15 & 0.413 & 0.008  & 895.692 & 968.930 &- &- &30.690\\
    \bottomrule
  \end{tabular}
  \vskip -0.1in
\end{table}

\begin{table}
    \caption{The empirical means of the CPU runtimes for different algorithms for different $d$ with $c=3$}
    \label{table_dataset2_4}
    \vskip 0.15in
    \centering
  \begin{tabular}{rrrrrrrrr}
    \toprule
    &  \multicolumn{7}{c}{CPU runtimes (secs)}                   \\
    \cmidrule(r){2-8}
    $d$ & \small OD-LinBAI & \scriptsize
Sequential Halving & \scriptsize
BayesGap-Oracle & \scriptsize
BayesGap-Adaptive &\small
Peace & \scriptsize
LinearExploration & \small
GSE\\
    \midrule
      2  & 0.003 & $<$0.001 & 0.001   & 0.001 & 0.068 & 0.020 & 0.011 \\ 
      3  & 0.004 & $<$0.001 & 0.002   & 0.002 & 3.982 & 0.519 & 0.022\\
      4  & 0.004 & $<$0.001 & 0.007   & 0.010 & 311.854 & 15.078 & 0.030\\
      5  & 0.005 & $<$0.001 & 0.049   & 0.065 &- &- & 0.196 \\
      6  & 0.007 & $<$0.001 & 0.375   & 0.484 &- &- & 0.135 \\
      7  & 0.012 & 0.001  & 2.996   & 3.488   &- &- & 0.310 \\
      8  & 0.037 & 0.001  & 28.768  & 31.679  &- &- & 1.164\\
      9  & 0.107 & 0.005  & 247.645 & 280.018 &- &- & 7.707\\
      \bottomrule
    \end{tabular}
    \vskip -0.1in
 \end{table}

\subsection{Real-world dataset: Abalone dataset} 
\label{subsec:abalone}
We conduct an experiment on the Abalone dataset \citep{Dua:2019}, which  includes  $4177$ groups of $8$ attributes (such as sex, length, diameter, etc.) of the abalone as well as its target variable which is the abalone's age. The age of each abalone is usually hard to determine so it is tempting to predict the age using the $8$ attributes from physical measurements. To adapt the dataset into a linear bandit problem, we first use the whole dataset to calculate the linear regression coefficient vector $\theta^* \in \mathbb R ^ 9$ and then form a set of arm vectors by the attributes of $400$ abalones with the largest true ages. Therefore, in this real-world dataset, it holds that $d=9$ and $K=400$. We assume that the additive random noise follows a Gaussian distribution $\mathcal{N}(0, 10^2)$. The experimental results of the $5$ different algorithms\footnote{For large $d$ and $K$, the computation of the arm allocation rules in Peace and LinearExploration is intractable with a reasonable computing resource. %\red{Therefore, we don't apply these two methods on the Abalone dataset.} 
See Appendix~\ref{appendix_exp2}.} are shown in Figure~\ref{fig_dataset3}. 
\begin{figure}[htbp]
    %\vskip 0.2in
	\centering
	\includegraphics[width=0.8\textwidth]{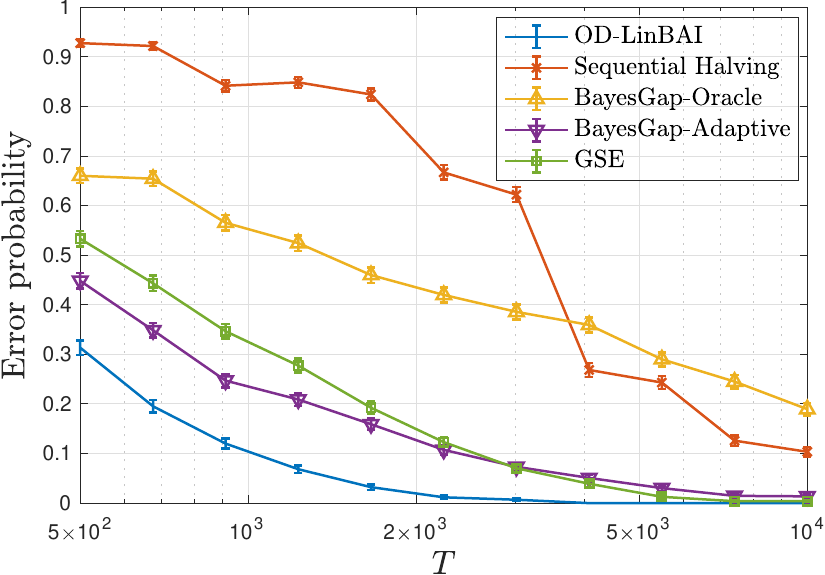}
	\caption{Error probabilities for different time budgets $T$.}
	\label{fig_dataset3}
	%\vskip -0.2in
\end{figure}

From Figure~\ref{fig_dataset3}, we see that OD-LinBAI outperforms the other competitors for all time horizons $T$.

\subsection{Comparisons to LT\&S}
Lazy Track-and-Stop (LT\&S) was proposed by \citet{jedra2020optimal} for the problem of best arm identification in linear bandits in the fixed-confidence setting, which also achieves asymptotic optimality. It is interesting to empirically investigate the fundamental difference between the fixed-confidence setting and the fixed-budget setting. We run some experiments to assess the performance of LT\&S. The experiments are based on synthetic dataset 1, which was also considered in \citet{jedra2020optimal}. Besides, in this synthetic dataset, OD-LinBAI is clearly superior to other existing methods for the fixed-budget setting (e.g., BayesGap, Peace, LinearExploration and GSE); see Section~\ref{subsec:syn1}. To adapt LT\&S to the fixed-budget setting, we omit the stopping rule, and retain the sampling rule as well as the decision rule. Besides, we consider the no averaging version of LT\&S, which demonstrates better empirical performance. The error probabilities (averaged over $8192$ independent trials) for various $K$ and $T$ are reported in Table~\ref{table_fixedconfidence}.

 \begin{table}
    \caption{Error probabilities for various $K$ and $T$.}
    \label{table_fixedconfidence}
    \vskip 0.15in
    \centering
    \begin{tabular}{lllllll}
    \toprule
    $T$ ($K=10$)  & 5     & 10    & 25    & 50    & 100   &200 \\
    \midrule
    OD-LinBAI &  0.3585  &  0.2563  &  0.1338  &  0.0610   & 0.0126  &0.0007\\
    LT\&S      &  0.3700  &  0.2629 &   0.1211  &  0.0469   &  0.0085 &0.0004\\
    \bottomrule \\[0.5cm]
    \toprule
    $T$ ($K=25$)  & 5     & 10    & 25    & 50    & 100   &200\\
    \midrule
    OD-LinBAI &   0.3640   & 0.2716  &  0.1525  &  0.0725  &0.0197 &0.0023\\
    LT\&S      &  0.4071    & 0.2722  &  0.1461  &  0.0676  &0.0178 &0.0012\\
    \bottomrule \\[0.5cm]
    \toprule
    $T$ ($K=35$)  & 10    & 25    & 50    & 100    &200 & 300\\
    \midrule
    OD-LinBAI &     0.2777   & 0.1615    &0.0719    & 0.0204 &0.0020& 0.0005\\
    LT\&S      &  0.2883    &0.1581  &  0.0823  & 0.0245 &0.0027  &0.0002\\
    \bottomrule \\[0.5cm]
    \toprule
    $T$ ($K=50$)  & 10    & 25    & 50    & 100    &200 & 300 \\
    \midrule
    OD-LinBAI &    0.2861    &0.1617   & 0.0778    & 0.0232  & 0.0023 & 0.0005\\
    LT\&S      &    0.2894    &0.1731   & 0.0924   & 0.0317  & 0.0045 &0.0010\\
    \bottomrule 
    \end{tabular}
    \vskip -0.1in
 \end{table}

 OD-LinBAI outperforms LT\&S when $T$ is small and is inferior to LT\&S when $T$ is large enough.\footnote{This phenomenon is not observed when $K = 50$ since $T = 300$ is not large enough for this observation to be made. However, we note that even with $8192$ independent trials, this is not sufficient to estimate the minimal error probabilities with high enough statistical confidence.} The results are consistent with what we expect based on the sampling rule of LT\&S since it may perform  sub-optimally   when the number of  time steps $T$  is small  but LT\&S is guaranteed to converge to the optimal rule  as $T$ tends to infinity. However, it remains open as to whether this greedy method is close-to-optimal in the fixed-budget setting.  In contrast, OD-LinBAI is minimax optimal in the fixed-budget regime; see Section~\ref{subsection_lower}. Thus, these two algorithms work well in different regimes. 

%%%%%%%%%%%%%%%%%%%%%%%%%%%%%%%%%%%%%%%%%%%%%%%%%%%%%%%%%%%%%%%%%%%%%%%%%%%%%%%
%%%%%%%%%%%%%%%%%%%%%%%%%%%%%%%%%%%%%%%%%%%%%%%%%%%%%%%%%%%%%%%%%%%%%%%%%%%%%%%

\end{document}